%% file: acl_latex.tex
\documentclass[11pt]{article}

\usepackage[preprint]{acl}

\usepackage{times}
\usepackage{latexsym}
\usepackage[T1]{fontenc}
\usepackage[utf8]{inputenc}
\usepackage{microtype}
\usepackage{upquote}
\usepackage{graphicx}
\usepackage{xcolor}
\usepackage{subcaption}
\usepackage{booktabs}
\usepackage{tabularx}
\usepackage{array}
\usepackage{amsmath}
\usepackage{amssymb}
\usepackage{mathtools}
\usepackage{amsthm}
\usepackage{algorithm}
\usepackage{algorithmic}
\usepackage[capitalize,noabbrev]{cleveref}
\usepackage{placeins}
\usepackage{balance}

\graphicspath{{figures/}}

\newcolumntype{Y}{>{\centering\arraybackslash}X}

\definecolor{rawbg}{RGB}{250,232,232}
\definecolor{maskbg}{RGB}{250,244,215}
\definecolor{disanbg}{RGB}{229,244,236}
\definecolor{boxline}{RGB}{185,185,185}
\definecolor{brandblue}{RGB}{0,70,130}

\newcommand{\paperheader}{%
\noindent\makebox[\textwidth][c]{%
\begin{minipage}[b]{0.62\textwidth}
  \raggedright
  \raisebox{-0.28\height}{\includegraphics[height=0.36in]{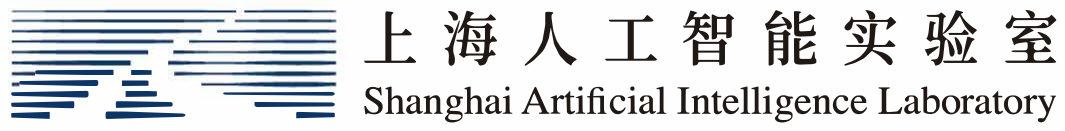}}%
  \hspace{0.12in}%
  \raisebox{-0.28\height}{\includegraphics[height=0.48in]{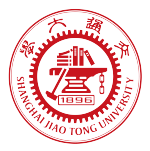}}%
\end{minipage}%
\begin{minipage}[b]{0.38\textwidth}
  \raggedleft
  \raisebox{-0.20\height}{\includegraphics[height=0.30in]{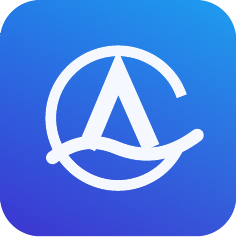}}\hspace{0.45em}%
  {\fontsize{20}{21}\selectfont\bfseries\color{brandblue} Intern-Shannon}
\end{minipage}}\\[-0.35em]
{\color{black}\rule{\textwidth}{0.8pt}}%
}

\theoremstyle{plain}
\newtheorem{theorem}{Theorem}[section]

\newtheorem{lemma}[theorem]{Lemma}

\theoremstyle{definition}
\newtheorem{definition}[theorem]{Definition}
\newtheorem{assumption}[theorem]{Assumption}

\theoremstyle{remark}

\makeatletter
\renewenvironment{abstract}%
  {{\centering\large\textbf{\abstractname}\par}%
    \vspace{0.2em}%
    \begin{list}{}%
      {\setlength{\rightmargin}{0.6cm}%
        \setlength{\leftmargin}{0.6cm}%
        \setlength{\topsep}{0pt}%
        \setlength{\partopsep}{0pt}%
        \setlength{\parsep}{0pt}%
        \setlength{\itemsep}{0pt}}%
      \item[]\ignorespaces%
      \@setsize\normalsize{12pt}\xpt\@xpt
  }%
  {\unskip\end{list}}
\makeatother

\title{\paperheader\\[0.5in]
Privacy-Preserving Text Sanitization for Distributed Agents Collaboration via Disentangled Representations}

\author{
\textbf{Xuan Liu}\textsuperscript{*} \quad
\textbf{Hefeng Zhou}\textsuperscript{*} \quad
\textbf{Sicheng Chen} \quad
\textbf{Chao Yang} \quad
\textbf{Xingcheng Xu} \\
\textbf{Jingjing Qu}\textsuperscript{†} \quad
\textbf{Jiong Lou} \quad
\textbf{Jie LI} \quad
\textbf{Xia Hu} \\
{\normalfont\textsuperscript{1}Shanghai Artificial Intelligence Laboratory} \\
{\normalfont\textsuperscript{2}Shanghai Jiao Tong University}
}

\begin{document}
\maketitle
\begingroup
\renewcommand{\thefootnote}{*}
\footnotetext{Equal contribution. †Corresponding author. Code is available at \url{https://github.com/RezinChow/DiSan}. Intern-Shannon is the next-generation Agentic Operating System developed by Shanghai AI Lab, which will be officially released soon. }
\endgroup

\begin{abstract}
\input{sections/abstract}
\end{abstract}

\input{sections/introduction}
\input{sections/related_work}
\input{sections/problem_statement}
\input{sections/method}
\input{sections/experiments}
\input{sections/conclusion}

\input{sections/limitations}

\input{acl_latex.bbl}
\clearpage
\appendix
\onecolumn
\input{sections/appendix_threat_model}
\input{sections/appendix_method_details}
\input{sections/appendix_experimental_supplement}
\input{sections/appendix_theoretical_analysis}

\end{document}

%% file: sections/abstract.tex
When distributed agents exchange text across organizational boundaries, privacy leakage arises not only from explicit identifiers but also from \emph{distributional signatures} such as formatting conventions, vocabulary choices, and syntactic patterns. We propose \textit{DiSan} (\textbf{Di}sentangled \textbf{San}itization), a privacy-preserving sanitization framework and a built-in component of Intern-Shannon for multi-agent collaboration. \textit{DiSan} uses a two-stream encoder to factorize text into a source-invariant role subspace that preserves task semantics and a source-identifying style subspace that remains local. Federated prototype alignment and adversarial regularization enable joint training without centralizing raw text. Experiments show that identifier-level masking is insufficient: masking 19.2\% of tokens reduces TF-IDF stylometric attribution by only 18.6\%. By contrast, \textit{DiSan} reduces answer-level PII exposure by 20$\times$ while maintaining 83\% answer faithfulness on a distributed multi-agent RAG benchmark, and lowers Enron stylometric attribution by 73.2\% under TF-IDF and 70.6\% under a neural probe.

%% file: sections/introduction.tex
\section{Introduction}
\begin{figure}[!tb]
\vskip -0.05in
\begin{center}
\centerline{\includegraphics[width=1\columnwidth]{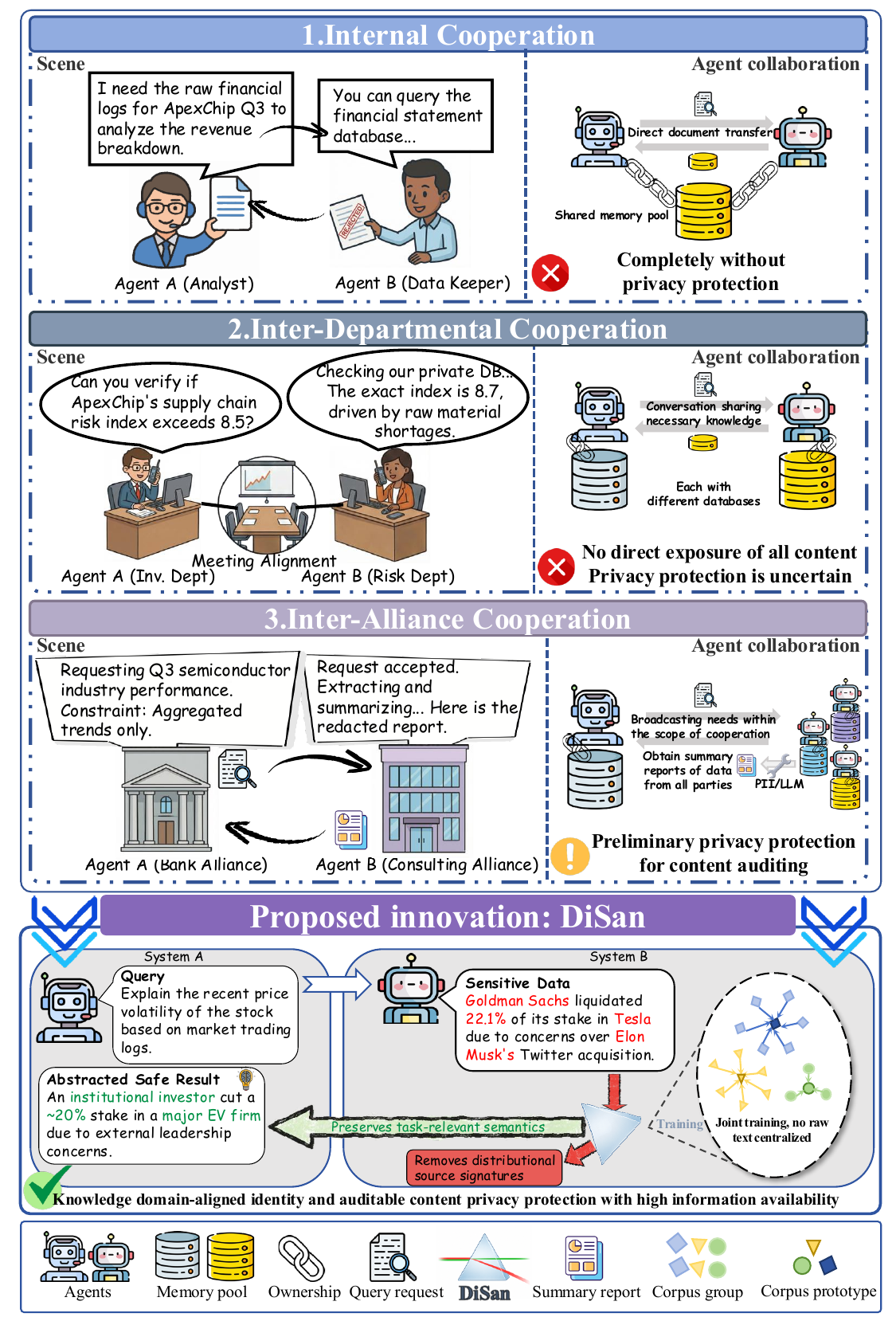}}
\caption{Privacy risks in cross-organizational text sharing. \textbf{Top}: Three representative collaboration settings, ranging from no protection to preliminary filtering, each leaving source-identifying content exposed. \textbf{Bottom}: \textsc{DiSan} produces sanitized text that preserves task-relevant semantics while removing both explicit PII and distributional source signatures.}
\label{fig:motivation}
\end{center}
\vskip -0.2in
\end{figure}

Cross-organizational collaboration on text-intensive tasks, including retrieval-augmented generation~\cite{NEURIPS2020_RAG}, distributed question answering, and cross-institutional case retrieval~\cite{stubbs2015automated}, requires parties to share textual evidence while keeping raw data local. Each party may hold proprietary document collections with distinct domain expertise, and a common pattern is for a requesting party to seek auxiliary evidence from helpers, who retrieve local snippets and transmit them for downstream use such as answer synthesis~\cite{minaee2024large_llmsurvey}.

However, any inter-agent text exchange exposes private organizational information beyond what public capability tags disclose. As illustrated in \cref{fig:motivation}, this risk spans all levels of cross-organizational collaboration: from intra-organization transfers with no protection to inter-alliance sharing where only preliminary filtering is applied. In each setting, shared text leaks private organizational information at two levels: explicitly through identifiers such as names, account numbers, and addresses, and implicitly through \emph{distributional signatures}: formatting conventions, vocabulary choices, and syntactic patterns that encode the originating party's internal practices~\cite{Malik2011EnhancedSA}. This is fundamentally a representation-level problem, not an identifier-level one: private organizational information is a property of the text distribution, not of individual identifiers, and anonymization methods that operate in text space cannot alter these distributional properties. \Cref{tab:privacy_comparison} makes this concrete: raw sharing exposes not only a counterparty identity but also a proprietary bulletin format, reference scheme, and sector taxonomy; placeholder masking hides surface identifiers but leaves the naming convention intact and collapses distinct entities into generic tokens, weakening grounding, provenance tracking, and cross-document aggregation. \textsc{DiSan} instead preserves role facts such as exposure amount, sector, rating change, and review status while suppressing source-specific fingerprints.

\input{sections/privacy_comparison_table}

Existing approaches address symptoms rather than structure. Rule-based PII detectors~\cite{Li2021FederatedLO} target individual identifiers such as named entities and account numbers but are blind to distributional signatures, since private organizational information is distributed across the text as statistical patterns, not localized to individual spans. LLM-based paraphrasing~\cite{xiao2024large} reshuffles surface form but provides no mechanism to ensure the output distribution is statistically source-invariant. Federated learning~\cite{fl-mcmahan17a} decentralizes model training but produces shared \emph{predictors}, not shareable \emph{data}. The core challenge is structural: what is the minimal sufficient representation that preserves task semantics while being statistically source-invariant?

We propose \textsc{DiSan}, a sanitization framework for cross-agent evidence exchange.
It learns a role--style factorization of each evidence snippet, where the \emph{role} subspace preserves task-relevant semantics and the \emph{style} subspace captures source-linked variation.
Orthogonality encourages the two subspaces to separate, while prototype alignment keeps role representations comparable across non-IID agents without centralizing raw text.
The resulting sanitizer produces shareable text from the role stream while keeping style information local. \textsc{DiSan} further serves as a key privacy-preserving component of \textbf{\textit{Intern-Shannon}}, where it is integrated as a built-in text-sanitization module and can be invoked on demand during multi-agent collaboration.

\paragraph{Contributions.}
\textbf{(i) Disentangled sanitization for text sharing:} We formulate cross-agent text sanitization as role--style factorization, separating task semantics from source-linked variation.
\textbf{(ii) Federated role alignment:} We introduce lightweight prototype alignment to stabilize role spaces across non-IID agents without centralizing raw text.
\textbf{(iii) Privacy diagnostics across sharing surfaces:} We evaluate privacy at the output, representation, and prototype levels, distinguishing application-stage leakage from training-stage artifacts.
\textbf{(iv) Empirical validation:} On distributed-agent RAG, \textsc{DiSan} reduces answer-level PII exposure by 20$\times$ while maintaining 83\% answer faithfulness. On Enron emails, it reduces TF-IDF stylometric attribution by 73.2\%, substantially outperforming identifier-level masking.

\FloatBarrier

%% file: sections/privacy_comparison_table.tex
\begin{table}[!t]
\centering
\footnotesize
\setlength{\fboxsep}{4pt}
\setlength{\fboxrule}{0.4pt}
\caption{Compact CorporateBank$\to$AssetManager financial-risk example. \textbf{Bold} marks source-identifying patterns that survive identifier masking.}
\label{tab:privacy_comparison}
\makebox[\columnwidth][c]{\fcolorbox{boxline}{rawbg}{\begin{minipage}[t]{0.955\columnwidth}\raggedright
\textbf{Raw private text ($d$).}
\textit{``Per \textbf{Meridian Bank's Counterparty Risk Bulletin (Ref: CR-2024-047)}, \textbf{Apex Dynamics} carried \$6.1M exposure as of Q3 close, was downgraded to BB+, and was flagged for portfolio review.''}
\end{minipage}}}

\makebox[\columnwidth][c]{\fcolorbox{boxline}{maskbg}{\begin{minipage}[t]{0.955\columnwidth}\raggedright
\textbf{Placeholder masking only.}
\textit{``Per \textbf{Meridian Bank's Counterparty Risk Bulletin (Ref: [ID])}, \textbf{[ORG]} carried \$6.1M exposure as of Q3 close, was downgraded to BB+, and was flagged for portfolio review.''}
\end{minipage}}}

\makebox[\columnwidth][c]{\fcolorbox{boxline}{disanbg}{\begin{minipage}[t]{0.955\columnwidth}\raggedright
\textbf{\textsc{DiSan} output ($\tilde{d}$).}
\textit{``A corporate-bank Q3 risk bulletin flags an industrials counterparty with \$6.1M exposure, a BB+ downgrade, and mandatory portfolio review.''}
\end{minipage}}}
\end{table}

%% file: sections/related_work.tex
\section{Related Work}
\paragraph{Privacy-Preserving Machine Learning.}
Protecting privacy across distributed data sources is a persistent challenge in collaborative machine learning~\cite{Li2021FederatedLO}. Differential privacy (DP)~\cite{DP} provides formal guarantees, and DP-SGD~\cite{DP-SGD} extends these to deep learning~\cite{DP_convex_optimization,DeepLearning_DP,Discrete_Gaussian_Differential_Privacy}. Federated learning~\cite{fl-mcmahan17a} enables collaborative model training without sharing raw data~\cite{FL_review,horizonFL,SCAFFOLD}. Within this paradigm, prototype-based methods~\cite{Tan2021FedProtoFP,FedTGP_2024}, representation-based approaches~\cite{MOON,FedCon}, and communication-efficient techniques~\cite{oneshotFL,fedllm_finetune} have been proposed. Recent federated RAG formulations aim to enable multi-party retrieval under privacy constraints~\cite{qian2025hyfedrag,he2025pfedrag,mao2025privacy,chakraborty2025federatedretrievalaugmentedgenerationsystematic}. While these methods focus on training shared models or retrievers, our work addresses a complementary problem: \emph{sanitizing text data itself} so it can be safely shared for downstream use.

\paragraph{Text Sanitization and De-identification.}
Traditional text de-identification relies on rule-based or NER-based PII detection followed by masking or replacement~\cite{Malik2011EnhancedSA}. While effective for explicit identifiers, these methods miss implicit leakage through writing style, document structure, and domain-specific patterns. Authorship attribution research~\cite{stamatatos2009survey} demonstrates that stylometric features can identify sources even from short texts. Recent work explores LLM-based paraphrasing for privacy~\cite{shi2025privacy} and DP-based text generation~\cite{meisenbacher2024dpllm,xie2024dptext}. However, DP-text methods incur severe utility degradation (30--50\% coherence loss for $\epsilon < 10$) that is prohibitive for RAG applications requiring semantic fidelity~\cite{meisenbacher2024dpllm}. Our approach addresses both explicit PII and implicit stylistic fingerprints through learned disentanglement, achieving strong empirical privacy without the utility cost of DP noise on text outputs.

\paragraph{Disentangled Representations.}
Disentangled representation learning aims to separate independent factors of variation in data~\cite{bengio2013representation}. In NLP, disentanglement has been applied to separate content from style for style transfer~\cite{john2019disentangled}, sentiment from semantics, and speaker identity from linguistic content in speech~\cite{qian2019autovc}. Recent work enables explicit control in text generation via disentangled representations~\cite{liu2024multi,han2024disentangled}. These works establish that separating semantic content from stylistic or identity-related factors supports both controllable generation and privacy; we apply the same principle to text sharing, where the goal is to isolate task-relevant content from source-identifying patterns.

%% file: sections/problem_statement.tex
\section{Problem Statement}
\label{sec:prelim}

\subsection{Text Sharing for Distributed Agent Collaboration}
\label{sec:routing}
We consider $C$ distributed agents, each hosting a private document repository $\mathcal{D}_c$.
When a requesting agent cannot answer a query $q$ locally, it routes to a candidate helper set $\mathcal{C}(q)$ via embedding similarity over public \textbf{capability tags}, such as ``AssetManager'' and ``CorporateBank''.
Each helper retrieves a local snippet $d$ and returns a \emph{sanitized} snippet $\tilde d$ for downstream use in RAG or distributed question answering.

\cref{tab:privacy_comparison} concretizes the sanitization goal: raw text exposes explicit PII and institutional style fingerprints; placeholder masking removes identifiers but leaves distributional signatures intact and can weaken downstream grounding when masked spans are task-relevant; only a representation-level approach targets both privacy risks while preserving task semantics.

\subsection{Threat Scope}
\label{sec:threat}

Our primary privacy concern arises in the \emph{application stage}. A requesting agent receives sanitized evidence $\tilde d$ from a helper agent and may try to infer information beyond the helper's public capability tag, including explicit PII, writing patterns tied to the source, or organizational document fingerprints.
The sanitizer is trained federatively across the same agents; the coordinator follows the protocol and does not access raw text.
We additionally evaluate whether training artifacts such as uploaded prototypes contain persistent distributional signatures tied to individual sources beyond public tags.
We do not claim formal differential privacy or robustness to malicious servers, poisoning, prompt injection, or collusion; the complete threat model is in \cref{app:threat}.

\paragraph{Goals and validation.}
We do not hide public capability tags or agent participation; we aim to prevent leakage beyond these facts while preserving downstream RAG utility.
Our validation follows this scope: PII leakage and answer exposure measure explicit identifiers, stylometry measures source signatures in transmitted text, embedding and prototype attribution diagnose learned sharing artifacts, and F1, faithfulness, and ChunkHit@3 measure utility.

%% file: sections/method.tex
\section{Method}
\label{sec:method}

\paragraph{Overview.}
\textsc{DiSan} enforces role--style orthogonality as an explicit architectural constraint: a two-stream encoder projects each input into a \textbf{role} subspace encoding source-invariant task semantics and a \textbf{style} subspace encoding agent-specific variation. Role representations are used to decode sanitized text $\tilde{d}$; style representations assist local generation for fluency and are then discarded. Only $\tilde{d}$ crosses the privacy boundary.

Training without centralizing raw text poses a calibration challenge: per-agent isolation causes role spaces to drift across agents, degrading both utility and privacy. \textsc{DiSan} addresses this by exchanging compact role prototypes, aligning local role distributions to shared global anchors, and applying adversarial regularization to suppress source-specific prototype signatures beyond public agent tags. \cref{fig:method} illustrates the architecture.
\FloatBarrier
\begin{figure*}[!tb]
\vskip 0.1in
\begin{center}
\centerline{\includegraphics[width=0.78\textwidth]{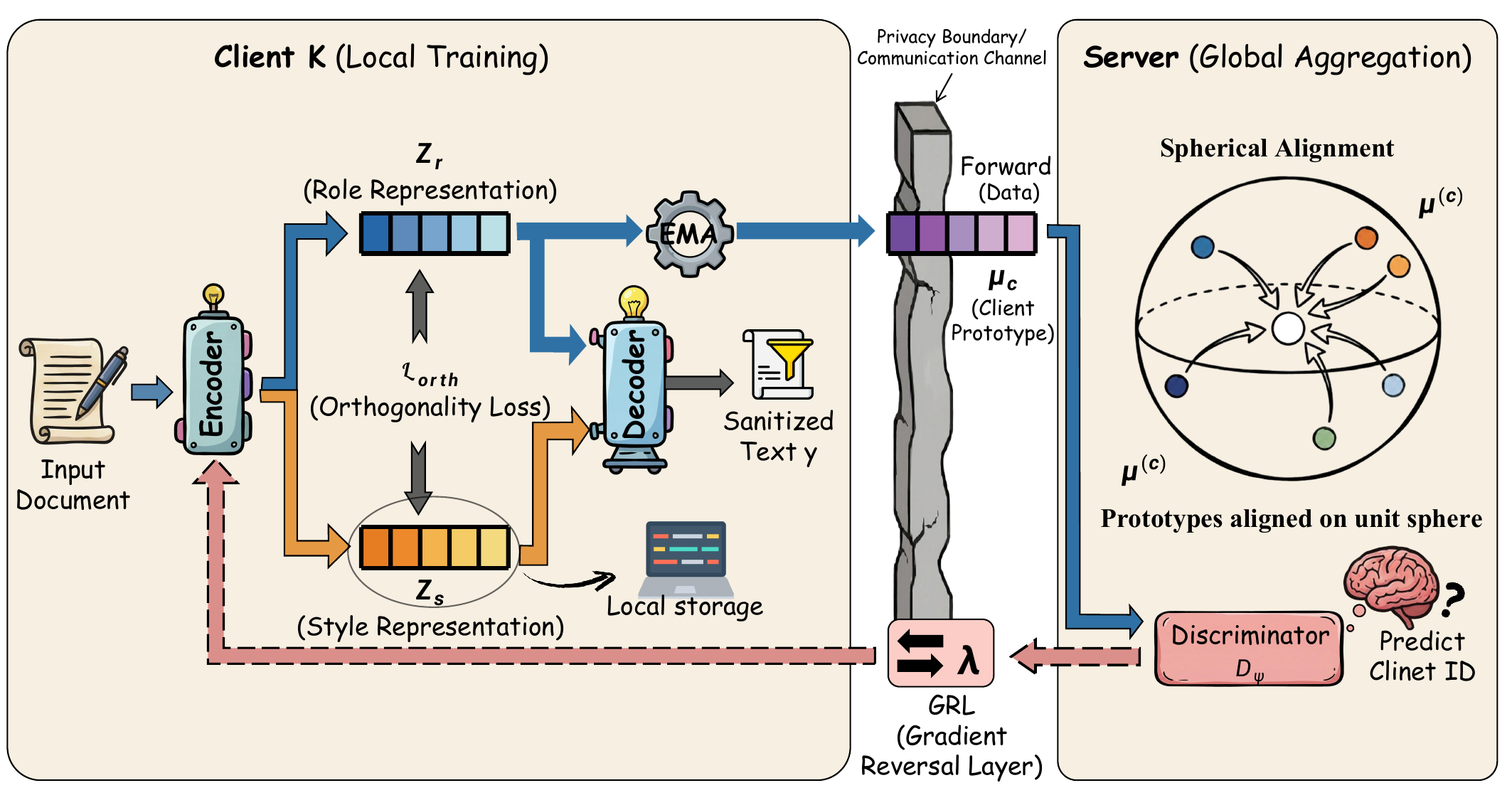}}
\caption{\textsc{DiSan} architecture. \textbf{Left (agent):} A two-stream encoder produces role representations \(\mathbf{Z}_r\) capturing source-invariant semantics and style representations \(\mathbf{Z}_s\) capturing agent-specific variation; \(\mathcal{L}_{\text{orth}}\) enforces their separation. Both are fused locally for decoding; \(\mathbf{Z}_s\) is discarded after use and never transmitted. Only role prototypes \(\boldsymbol{\mu}_c\) cross the privacy boundary. \textbf{Right (server):} Prototypes are aligned spherically; gradient reversal suppresses source-specific signatures in uploaded prototypes.}
\label{fig:method}
\end{center}
\vskip -0.2in
\end{figure*}

\subsection{Role--Style Disentangled Encoder}
\label{sec:two_stream}

\paragraph{Two-stream projection.}
Given an evidence sequence \(d=(d_1,\ldots,d_T)\), a pretrained encoder produces hidden states
\(\mathbf{H}=\mathrm{Encoder}(d)\in\mathbb{R}^{T\times d_{\text{enc}}}\).
We project \(\mathbf{H}\) into a \emph{role} stream and a \emph{style} stream (backbone and projection dimensions in \cref{app:arch_details}):
\begin{align}
\mathbf{Z}_r &= \mathbf{H}\mathbf{W}_r,\quad \mathbf{W}_r\in\mathbb{R}^{d_{\text{enc}}\times d_r},\ \mathbf{Z}_r\in\mathbb{R}^{T\times d_r}, \label{eq:role_proj}\\
\mathbf{Z}_s &= \mathbf{H}\mathbf{W}_s,\quad \mathbf{W}_s\in\mathbb{R}^{d_{\text{enc}}\times d_s},\ \mathbf{Z}_s\in\mathbb{R}^{T\times d_s}. \label{eq:style_proj}
\end{align}
In our implementation, the encoder--decoder backbone is LongT5-TGlobal-Base.
Both \(\mathbf{Z}_r\) and \(\mathbf{Z}_s\) are 256-dimensional token streams; their concatenation forms a 512-dimensional bottleneck that is projected back to \(d_{\text{enc}}\) before decoding.
Intuitively, \(\mathbf{Z}_r\) captures shareable content structure (entities/relations/events),
while \(\mathbf{Z}_s\) captures agent-specific phrasing and formatting.

\paragraph{Fusion for generation.}
We fuse the two streams before decoding via \(\mathbf{H}_{\text{fused}} = g([\mathbf{Z}_r; \mathbf{Z}_s])\), where \(g\) is a learned projection back to \(d_{\text{enc}}\).
An optional residual path \(\mathbf{H}_{\text{out}} = \alpha \cdot \mathbf{H}_{\text{fused}} + (1-\alpha)\cdot \mathbf{H}\) stabilizes early training (details in \cref{app:fusion}).
The pretrained LongT5 Transformer decoder then generates \(\tilde d\) autoregressively from the modified encoder states; \textsc{DiSan} inserts a role--style bottleneck rather than replacing the generator.

Although \(\mathbf{Z}_s\) may contain source-linked variation, it is used only inside the helper during decoding and is never transmitted.
The decoder is trained to preserve role facts while removing identifiers and source-specific wording, so the local style stream serves as a fluency aid rather than a shared artifact.
Residual leakage is evaluated on the transmitted \(\tilde d\) through output-level PII and stylometry metrics.

\paragraph{Disentanglement.}
Let \(\bar{\mathbf{z}}_r=\frac{1}{T}\sum_{t=1}^{T}\mathbf{z}_{r,t}\) and \(\bar{\mathbf{z}}_s=\frac{1}{T}\sum_{t=1}^{T}\mathbf{z}_{s,t}\) denote mean-pooled vectors.
We encourage separation via:
\begin{equation}
\mathcal{L}_{\text{orth}}
=\cos^2(\bar{\mathbf{z}}_r,\bar{\mathbf{z}}_s)
=\left(\frac{\bar{\mathbf{z}}_r^\top \bar{\mathbf{z}}_s}{\|\bar{\mathbf{z}}_r\|_2\,\|\bar{\mathbf{z}}_s\|_2 + \epsilon}\right)^2.
\label{eq:orth}
\end{equation}

\subsection{Prototype Alignment on the Role Space}
\label{sec:proto}

To train the privacy transformer under non-IID agents, we align \emph{role distributions} across agents using lightweight role prototypes, providing global anchors without sharing raw evidence.

\paragraph{Prototype computation and aggregation.}
For each role/placeholder type \(k\in\mathcal{K}\), each agent computes a local prototype \(\boldsymbol{\mu}^{(c)}_{k}\) as an EMA of batch role centroids (token-level averages over type-\(k\) positions).
At each training round, the server aggregates a sample-weighted global prototype \(\boldsymbol{\mu}^*_{k}\).
Full definitions are provided in \cref{app:proto_details}.

\paragraph{Spherical alignment.}
To avoid magnitude-based leakage, we align on the unit hypersphere:
\begin{equation}
\mathcal{L}_{\text{proto}}
= \sum_{k\in\mathcal{K}}
\big(1 - \cos(\hat{\bar{\mathbf{z}}}_{r,k}, \hat{\boldsymbol{\mu}}^*_{k})\big),
\label{eq:proto_loss}
\end{equation}
where \(\hat{\cdot}\) denotes \(\ell_2\)-normalization, and \(\bar{\mathbf{z}}_{r,k}\) is the batch role centroid for type \(k\), defined as the token-level average over positions labeled as type \(k\) (see \cref{app:proto_details}).

\paragraph{Prototype-level adversarial training.}
Prototypes may still carry source-specific distributional signatures beyond public agent tags.
We apply a discriminator \(D_\psi\) with gradient reversal (GRL)~\cite{raff2018gradient} to make prototypes less predictive of their source:
\begin{equation}
\mathcal{L}_{\text{adv}}
=\sum_{k\in\mathcal{K}}
\mathrm{CE}\!\left(D_\psi(\mathrm{GRL}_\gamma(\bar{\boldsymbol{\mu}}^{(c)}_{k})),\,c\right),
\label{eq:adv}
\end{equation}
where \(\gamma\) is the GRL strength and \(\bar{\boldsymbol{\mu}}^{(c)}_{k}\) is a gradient-enabled estimate (\cref{app:proto_details}).
All attribution discriminators use the same compact MLP template, input \(\to 128 \to 128 \to C\), with ReLU activations and dropout 0.1; the prototype discriminator takes 256-dimensional role prototypes as input.

\subsection{Federated Optimization}
\label{sec:training}

\paragraph{Local objective.}
On agent \(c\), we minimize:
\begin{equation}
\begin{split}
\mathcal{L}^{(c)}_{\text{local}}
={} & \mathcal{L}_{\text{seq}}
+\lambda_{\text{orth}}\mathcal{L}_{\text{orth}}
+\lambda_{\text{p}}\mathcal{L}_{\text{proto}} \\
& +\lambda_{\text{adv}}\mathcal{L}_{\text{adv}}
+\mathcal{L}_{\text{prox}},
\end{split}
\label{eq:total_loss}
\end{equation}
where \(\mathcal{L}_{\text{seq}}\) is the token-level cross-entropy on sanitization targets \(\tilde d\), and \(\mathcal{L}_{\text{prox}}=\frac{\nu}{2}\|\theta-\theta^*\|_2^2\) is a FedProx term discouraging drift from the global model (we use \(\nu\) to distinguish from prototype symbols \(\boldsymbol{\mu}\)).
After local optimization, the server aggregates model weights and prototypes.
Before uploading, agents apply \(\ell_2\)-normalization and Gaussian noise perturbation ($\sigma=0.01$) to prototypes (\cref{app:dp_proto}).
For communication efficiency, we train LoRA adapters on the encoder attention projections together with the role projection and fusion layers.
The style projection remains local and is never uploaded; per-round prototype exchange is only \(|\mathcal{K}|\times 256\) floating-point values per agent, negligible compared with model synchronization.

\begin{table*}[htbp!]
\vskip -0.05in
\caption{Main single-round results.
\textbf{Bold} = best, \underline{underline} = second best.
Metrics are defined in \cref{sec:metrics}.}
\label{tab:main_results}
\vskip 0.05in
\begin{center}
\resizebox{0.95\textwidth}{!}{%
\renewcommand{\arraystretch}{0.95}
\begin{tabular}{lcccccccc}
\toprule
Method
& \multicolumn{4}{c}{Semantic Similarity $\uparrow$}
& Faith. $\uparrow$
& ChunkHit@3 $\uparrow$
& \multicolumn{2}{c}{PII Exposure $\downarrow$} \\
\cmidrule(lr){2-5}\cmidrule(lr){8-9}
& F1 & Prec. & Rec. & Cos.
&  &  & Avg. PII & Ans. Rate \\
\midrule
RAG Baseline without PII process
& 0.6602 & 0.5519 & 0.8214 & 0.6421
& 86.10\% & 75.2\% & 6.4753 & 11.8\% \\
\midrule
Placeholder (\texttt{gliner-pii-large-v1.0})
& 0.5071 & 0.4624 & 0.5613 & 0.5923
& 74.60\% & 62.4\% & \underline{0.4714} & \underline{1.4\%} \\

Placeholder (\texttt{piiranha-v1-detect})
& 0.4894 & 0.4352 & 0.5587 & 0.5643
& 71.82\% & 58.6\% & 0.5126 & 1.6\% \\

Placeholder (\texttt{deberta-pii-finetuned})
& 0.4768 & 0.4541 & 0.5018 & 0.5819
& 72.63\% & 60.2\% & 0.6287 & 2.0\% \\
\midrule

LLM paraphrasing (\texttt{llama-3.1-8b-instruct})
& 0.4991 & 0.4427 & 0.5719 & 0.5872
& 77.89\% & 66.4\% & 1.2479 & 3.6\% \\

LLM paraphrasing (\texttt{Qwen2.5-7B-Instruct})
& \underline{0.5407} & \underline{0.5011} & 0.5872 & 0.6218
& \underline{79.35\%} & \underline{67.0\%} & 0.8412 & 2.6\% \\

LLM paraphrasing (\texttt{GLM-4-9B-0414})
& 0.4732 & 0.4315 & 0.5237 & 0.6072
& 77.13\% & 64.6\% & 2.1091 & 3.8\% \\

Policy gating
& 0.5013 & 0.3876 & \textbf{0.7936} & \underline{0.6322}
& 78.49\% & 65.2\% & 0.9714 & 2.2\% \\

\textbf{\textsc{DiSan} (ours)}
& \textbf{0.5631} & \textbf{0.5341} & \underline{0.6508} & \textbf{0.6558}
& \textbf{83.17\%} & \textbf{73.4\%} & \textbf{0.1337} & \textbf{0.6\%} \\
\bottomrule
\end{tabular}%
}
\end{center}
\vskip -0.05in
\end{table*}

\subsection{Deployment: Multi-Agent Text Sharing}
\label{sec:deployment}

The preceding sections describe \emph{how to train} the sanitizer; this section describes \emph{how agents use it} at deployment.
Each agent deploys the trained model as a local sanitizer: given raw text \(d\), it produces \(\tilde{d}\) via role--style fusion and decoding.
Crucially, \(\mathbf{Z}_s\) is used only locally to improve generation quality and is \textbf{discarded after decoding}. Only the sanitized text \(\tilde{d}\) is transmitted.
This design enables flexible inter-agent data-sharing protocols while preserving the privacy guarantees established during training.

\paragraph{RAG pipeline.}
A requesting agent routes a query \(q\) to helper agents \(\mathcal{C}\) via capability-based routing (\cref{sec:routing}).
Each helper \(c \in \mathcal{C}\) retrieves local evidence \(d_c\) and returns:
\begin{equation}
\tilde{d}_c = \mathrm{Sanitize}_\theta(d_c, q),
\end{equation}
where \(\mathrm{Sanitize}_\theta\) denotes the trained sanitizer (\cref{sec:two_stream}--\ref{sec:training}).
The requester aggregates \(\tilde{\mathcal{D}} = \{\tilde{d}_c \mid c \in \mathcal{C}\}\) and generates a final answer.
This protocol naturally extends to multi-turn settings where the requester iteratively refines queries based on accumulated evidence (see \cref{sec:mas_analysis} for details).
Convergence analysis is provided in \cref{app:theory_convergence}.

%% file: sections/experiments.tex
\section{Experiments}
\label{sec:experiments}

\newcommand{\datasetname}{\texttt{synthetic\_\allowbreak{}pii\_\allowbreak{}finance\_\allowbreak{}multilingual}}
\newcommand{\glinermodel}{\texttt{gliner-\allowbreak{}pii-\allowbreak{}large-\allowbreak{}v1.0}}
\newcommand{\piiranhamodel}{\texttt{piiranha-\allowbreak{}v1-\allowbreak{}detect-\allowbreak{}personal-\allowbreak{}information}}
\newcommand{\debertapiimodel}{\texttt{deberta-\allowbreak{}pii-\allowbreak{}finetuned}}

\subsection{Experimental Setup}
\label{sec:exp_setup}

\paragraph{Dataset.}
We use a multilingual synthetic finance corpus with annotated PII spans, \datasetname~\cite{gretel-synthetic-pii-finance-multilingual-2024}.

\paragraph{Agent configuration.}
To simulate distributed agent collaboration, we construct \(C{=}7\) agents by assigning each agent a disjoint inventory of document types.
The resulting partition induces non-IID skews by design.
We denote agents by their capability tags: CorporateBank, AssetManager, FinTechPay, CorpGroup, MarketForecaster, ComplianceConsult, and SupplierCo.
These tags are treated as public in our threat model (\cref{sec:threat}); we therefore focus on leakage \emph{beyond} this public prior.
The exact doc-type identifiers used for each agent are listed in \cref{app:doc_types}.

\paragraph{RAG evaluation pipeline.}
Documents are chunked into fixed windows (256 tokens, overlap 50) and indexed per agent.
Given a query, the requesting agent routes to a candidate helper set via tag-based routing (\cref{sec:routing}), retrieves top-\(k\) evidence from each helper's local index using a BGE-M3 hybrid pipeline, receives sanitized snippets \(\tilde d\), and generates the final answer.
Grounded QA examples are synthesized from retrieval anchors and retained only when evidence spans are found in the source chunk; each record stores its \texttt{chunk\_id} for provenance-based ChunkHit@3.
Query and ground-truth construction details are in \cref{app:rag_construction}; retrieval architecture details are in \cref{app:impl_details}.
We evaluate on single-round sharing as our main setting; the multi-turn deployment extension is discussed in \cref{sec:mas_analysis}.

\paragraph{Training configuration.}
Training runs for 12 rounds with 300 local steps per round per agent, batch size 4, and learning rate \(2\times 10^{-4}\).
We set the proximal regularization strength \(\nu{=}0.1\) to mitigate drift under non-IID data.
Unless otherwise stated, we use \(\lambda_{\text{adv}}{=}1.0\), GRL strength \(\gamma{=}0.5\), \(\lambda_{\text{orth}}{=}0.2\), and prototype noise scale \(\sigma_{\text{noise}}{=}0.01\) (convergence analysis in \cref{app:dp_proto}).
Full hyperparameters and schedules are deferred to \cref{app:doc_types}.

\subsection{Baselines}
\label{sec:baselines}
We compare against practical alternatives for privacy-preserving text sharing:

\textbf{Placeholder-only:} This approach first applies a PII detection model to identify sensitive spans (names, dates, addresses, account numbers, etc.), then replaces each detected span with a type-specific placeholder token (e.g., \texttt{[NAME]}, \texttt{[DATE]}, \texttt{[ADDRESS]}, \texttt{[ACCOUNT]}). We evaluate three PII detectors: \glinermodel~\cite{zaratiana-etal-2024-gliner}, a generalist NER model; \piiranhamodel, a DeBERTa-based model~\cite{he2020deberta} fine-tuned for PII detection; and \debertapiimodel, another DeBERTa variant trained on PII corpora. While placeholder replacement removes explicit identifiers, it does not address implicit stylistic fingerprints; furthermore, opaque placeholder tokens can degrade downstream RAG by collapsing task-relevant spans into generic tokens and weakening answer grounding, provenance, and cross-document aggregation.

\textbf{LLM paraphrasing:} Locally paraphrase text with open-source LLMs using a privacy-focused prompt, then share the rewritten text.

\textbf{Policy gating:} Adapted from dynamic access-control memory sharing~\cite{rezazadeh2025collaborative}, a local policy model (Qwen2.5-7B) decides per chunk whether to share the original text, provide a summary, or refuse sharing, based on the requester's agent tag and the helper's data sensitivity level.

Prompts and policy templates are in \cref{app:baseline_details}.

\subsection{Evaluation Metrics}
\label{sec:metrics}

We evaluate sanitization quality using privacy and utility metrics.

\paragraph{Privacy metrics.}
\textbf{(i) Avg.\ PII}: the average number of PII spans detected in sanitized chunks by an external detector (lower is better).
\textbf{(ii) Ans.\ Rate}: the fraction of final answers that contain at least one exposed PII entity.
We instantiate the external detector as \texttt{gliner-pii-large-v1.0} with a 0.3 confidence threshold over common PII labels; implementation details are in \cref{app:pii_detector_details}.
\textbf{(iii) Distributional fingerprint leakage}: 7-way attribution accuracy/macro-F1 from learned role embeddings (EXP-1) and sanitized text via stylometry (EXP-3).
\textbf{(iv) Prototype fingerprint leakage}: 7-way attribution accuracy/macro-F1 from uploaded prototypes (EXP-2).
Since capability tags and participation are public, these probes do not test whether the helper capability tag is hidden.
They test whether transmitted text or learned artifacts still carry residual source-correlated fingerprints beyond that public information; full protocols are in \cref{sec:attack_eval}.

\paragraph{Utility metrics.}
\textbf{(i) F1/Prec./Rec./Cos.}: token-level F1, precision, recall, and cosine similarity between generated answers and ground-truth answers (bag-of-words TF representation; see \cref{app:impl_details}).
\textbf{(ii) Faithfulness}: the fraction of stopword-removed content words in the answer that appear in retrieved evidence.
\textbf{(iii) ChunkHit@3}: the fraction of ground-truth chunks appearing in top-3 retrieved results.

\subsection{Main Results}
\label{sec:main_results}

\textsc{DiSan} achieves strong privacy protection with modest utility loss. As shown in \cref{tab:main_results}, it reduces answer-level PII exposure from 11.8\% under unprotected sharing to just 0.6\%, while preserving answer faithfulness at 83.17\% close to the 86.10\% unprotected baseline.

\cref{fig:tradeoff} further shows that \textsc{DiSan} offers the most favorable privacy--utility trade-off among the evaluated sanitizers, being the only method that simultaneously achieves sub-1\% answer-level PII leakage and near-baseline task performance.

\begin{figure}[!htb]
\vskip 0.1in
\begin{center}
\centerline{\includegraphics[width=0.9\columnwidth]{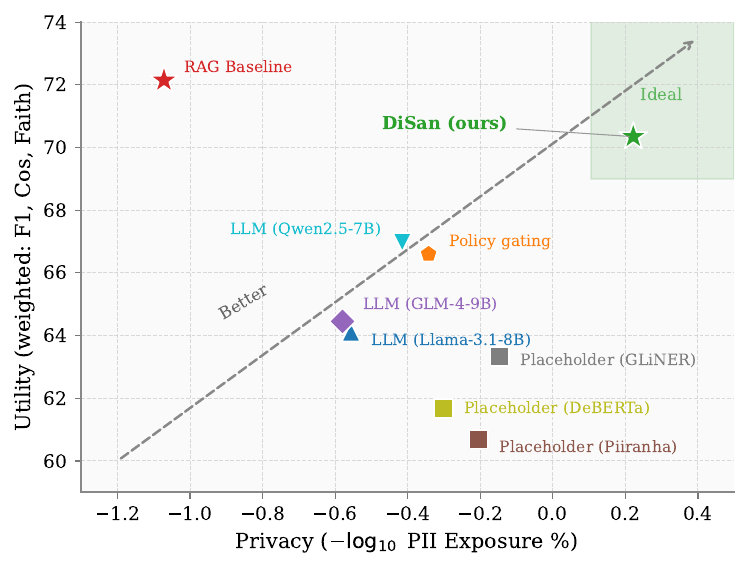}}
\caption{Privacy--utility trade-off. The ideal region (green, upper-right) represents high utility with strong privacy. Among evaluated methods, \textsc{DiSan} achieves the most favorable trade-off on this benchmark.}
\label{fig:tradeoff}
\end{center}
\vskip -0.2in
\end{figure}

\subsection{Ablation Study}
\label{sec:ablation}

We ablate two key components: \emph{style isolation} and \emph{prototype alignment}.
Full results are in \cref{app:ablation_results}.

\begin{table*}[tb]
\caption{\textbf{Ablation results (single-round).}
\textbf{Bold} = best column value; \underline{underline} = second best.
These are not competing sanitizers: A1 and A2 relax privacy constraints, so their higher utility metrics reflect under-sanitization rather than improvement. The relevant comparison is privacy cost vs.\ utility gain.}
\label{tab:ablation_single_round_est1}
\vskip 0.15in
\begin{center}
\begin{small}
\setlength{\tabcolsep}{6pt}
\begin{tabular}{lcccccc}
\toprule
\textbf{Method} &
\textbf{Sem. F1} $\uparrow$ &
\textbf{Cos.} $\uparrow$ &
\textbf{Faith.} $\uparrow$ &
\textbf{ChunkHit@3} $\uparrow$ &
\textbf{Avg. PII} $\downarrow$ &
\textbf{Ans. Rate} $\downarrow$ \\
\midrule
\textbf{\textsc{DiSan} (ours)} & 0.5631 & \underline{0.6558} & 83.17\% & \underline{73.4\%} & \textbf{0.1337} & \textbf{0.6\%} \\
A1: w/o $\mathcal{L}_{\text{orth}}$      & \underline{0.5694} & 0.6309 & \underline{83.46\%} & 72.7\% & 0.9706 & 1.4\% \\
A2: high-$\alpha$ fusion       & \textbf{0.5768} & \textbf{0.6664} & \textbf{84.12\%} & \textbf{74.4\%} & 1.2419 & 2.2\% \\
B: No-ProtoAlign               & 0.5487 & 0.6163 & 81.92\% & 69.8\% & \underline{0.3588} & \underline{0.8\%} \\
\bottomrule
\end{tabular}
\end{small}
\end{center}
\vskip -0.1in
\end{table*}

\paragraph{Style isolation ablations.}
We test: \textbf{(A1)} removing \(\mathcal{L}_{\text{orth}}\) (\(\lambda_{\text{orth}}{=}0\)), allowing role embeddings to absorb stylometric cues; and \textbf{(A2)} high-\(\alpha\) fusion (\(\alpha{=}0.9\)), amplifying style influence during decoding.
A1 and A2 show slightly higher RAG utility metrics because weaker style isolation allows more source-specific lexical content to pass through the decoder, surfacing additional matching tokens in the answer. This is precisely the privacy failure mechanism: the same leaked content that boosts surface utility exposes source-identifying information. A1 increases Avg.\ PII by $7.3\times$ and A2 by $9.3\times$, with answer-level exposure rising to 1.4\% and 2.2\% respectively. The appendix confirms the effect at the representation level: role attribution F1 rises from 0.13 (DiSan) to 0.18 (A1) and 0.24 (A2). These ablations reveal the privacy--utility trade-off: relaxing style isolation shifts the operating point toward higher utility at the cost of substantially weaker privacy. \textsc{DiSan} is selected as the operating point that maximizes privacy while incurring only modest utility loss.

\paragraph{Prototype alignment ablation.}
We remove all prototype components: no EMA prototypes, no global aggregation, and \(\lambda_p{=}0\) (removing \(\mathcal{L}_{\text{proto}}\) from \cref{eq:total_loss}).
Without prototype anchors, role spaces drift under non-IID training, degrading both utility and privacy: cosine similarity drops 6\%, ChunkHit@3 drops 3.6pp, and PII exposure increases 2.7$\times$.

\paragraph{Disentanglement verification.}
Beyond privacy metrics, we verify that the model learns the intended decomposition.
As diagnostic evidence, role embeddings used for sharing are near-random under the 7-way fingerprint probe (F1$=$0.05), while local-only style embeddings remain strongly source-correlated (F1$=$0.84).
\cref{fig:disentanglement} further confirms successful role--style separation: cosine similarity between role and style embeddings clusters near zero across all agents, and a t-SNE projection shows clear geometric separation between the two subspaces.
This indicates that $\mathcal{L}_{\text{orth}}$ concentrates source-correlated variation primarily in the local-only style subspace.
DiSan does not provide formal differential-privacy guarantees; full attack protocols across three surfaces are in \cref{sec:attack_eval}.

\begin{table}[!htb]
\caption{Enron stylometry probe (7 authors). Lower F1 indicates weaker residual distributional fingerprints.}
\label{tab:enron_stylometry_main}
\vskip 0.05in
\begin{center}
\begin{small}
\setlength{\tabcolsep}{3pt}
\begin{tabular}{@{}lcccc@{}}
\toprule
Method & Mask\% & TF-IDF & BERT & Red. \\
\midrule
Raw text       & --     & 0.825 & 0.691 & -- \\
GLiNER masking & 19.2\% & 0.672 & 0.618 & 18.6\% \\
\textsc{DiSan} & --     & \textbf{0.221} & \textbf{0.203} & \textbf{73.2\%} \\
Random         & --     & 0.143 & 0.143 & -- \\
\bottomrule
\end{tabular}
\end{small}
\end{center}
\vskip -0.1in
\end{table}

\cref{tab:enron_stylometry_main} treats Enron as a stylometry diagnostic rather than hidden-tag evaluation; \textsc{DiSan} lowers TF-IDF/BERT attribution to 0.221/0.203, near random and far below GLiNER.
Appendix~\ref{sec:attack_eval} further compares with dedicated authorship-obfuscation baselines, JAMDEC and StyleRemix, which target stylometric leakage rather than RAG utility and yield substantially smaller reductions.

We adopt LoRA~\cite{hu2022lora} for efficient training (details in \cref{app:arch_details,app:training_details}).

\FloatBarrier

\subsection{Multi-Party Collaboration Analysis}
\label{sec:mas_analysis}

\paragraph{Multi-turn RAG.}
The single-round protocol extends naturally to multi-turn settings where the requester refines queries based on accumulated evidence.
At each turn $t$, the requester issues a refined query $q_t$ to a possibly different helper subset; each helper sanitizes new evidence independently before transmitting it.
The key property is preserved: only sanitized text crosses the privacy boundary at every turn.
We treat this as a deployment extension of the same sanitization interface and leave dedicated multi-turn benchmarking to future work.

\paragraph{Case study: cross-organizational IPO analysis.}
\cref{fig:case_study} (see \cref{app:case_study}) illustrates a realistic multi-party scenario where Lumina Capital's Audit-Core agent evaluates a company for IPO eligibility but lacks sufficient external benchmarks.
Audit-Core queries partner agents via a data broker; each external agent sanitizes its response with \textsc{DiSan} before transmission, removing entity names, geographic identifiers, and organizational details while preserving task-relevant financial metrics.
This scenario instantiates the same agent setting used in our benchmark: routing is based on public capability tags, while each helper keeps its repository and retrieval index local.
The privacy boundary is therefore the transmitted sanitized snippet, not the helper identity or the fact of collaboration.
The final answer aggregates benchmark figures without exposing any source-identifying information, enabling accurate cross-party assessment across organizational boundaries.

%% file: sections/conclusion.tex
\section{Conclusion}
Identifier-level anonymization is insufficient for source-invariant text sharing because private organizational information is often a distributional property of text rather than a set of localized identifiers. \textsc{DiSan} addresses this by enforcing role--style orthogonality and federated role alignment, producing sanitized text that preserves task semantics while suppressing source-identifying patterns. Across distributed-agent RAG and Enron stylometry evaluations, the results show that representation-level disentanglement provides a practical path toward safer text sharing across distributed agents. Future work should study stronger adaptive adversaries, repeated-query settings, and broader cross-domain deployments.

%% file: sections/limitations.tex
\section*{Limitations}

\paragraph{Dataset.}
Our main experiments use a synthetic finance corpus with annotated PII spans.
The use of synthetic data is a necessary constraint rather than a methodological choice: datasets containing real PII cannot legally or ethically be used for research publication, and this is standard practice in privacy-preserving NLP.
Among publicly available PII-annotated corpora, most are unsuitable for RAG evaluation due to very short texts (social media, medical notes) or insufficient document-type diversity for multi-party simulation; the selected corpus provides the document length, PII annotation quality, and domain variety required by our setup.
RAG evaluation queries are synthesized on top of this validated base rather than generating both PII data and queries from scratch, which would compound synthesis risk across two stages.
External validation on the Enron email corpus, with real stylistic variation outside the finance domain, partially addresses domain generalizability.

\paragraph{Formal privacy guarantees.}
\textsc{DiSan} does not provide formal differential privacy guarantees.
Applying DP to sequence-to-sequence generation requires per-token noise calibration incompatible with coherent text generation; we instead rely on empirical validation across three attack surfaces.
Formal privacy analysis for generative sanitizers is an important direction for future work.

%% file: sections/appendix_threat_model.tex
\section{Threat Model Details}
\label{app:threat}

This section expands the threat scope summarized in \cref{sec:threat}.

\subsection{Adversary Roles}

We distinguish the primary adversary in the application stage from a narrower diagnostic in the training stage.
In both cases, capability tags and participation are public; privacy is defined as leakage \emph{beyond} these public facts.

\begin{table}[!htb]
\caption{Two-stage threat scope. Capability tags and agent participation are public in both stages.}
\label{tab:threat_scope}
\vskip 0.05in
\begin{center}
\begin{small}
\renewcommand{\arraystretch}{1.12}
\begin{tabularx}{0.98\textwidth}{@{}>{\raggedright\arraybackslash}p{0.15\textwidth}
>{\raggedright\arraybackslash}p{0.18\textwidth}
>{\raggedright\arraybackslash}X
>{\raggedright\arraybackslash}X
>{\raggedright\arraybackslash}p{0.18\textwidth}@{}}
\toprule
Stage & Observer & Visible information & Privacy concern & Evaluation \\
\midrule
Application & Requesting agent & Sanitized snippet, query context, and helper capability tag & Leakage beyond the public tag, including explicit PII, private evidence in the returned text, and organizational fingerprints & PII leakage metrics and text stylometry \\
\addlinespace[2pt]
Training & Federated coordinator or auditor & Model updates and uploaded role prototypes, but not raw text or full local embeddings & Persistent prototype signatures that reveal private corpus structure beyond public participation & Representation and prototype attribution probes \\
\bottomrule
\end{tabularx}
\end{small}
\end{center}
\vskip -0.1in
\end{table}

\paragraph{Primary recipient adversary.}
A collaborating agent receives sanitized text $\tilde{d}$ together with public routing metadata, including the helper's capability tag.
The recipient may try to recover private information from the sanitized content, including explicit PII, stylistic fingerprints tied to the source, organizational document conventions, or other evidence about the helper's private repository beyond the public tag used for routing.
This is the primary adversary addressed by the role and style disentanglement in \textsc{DiSan}.

\paragraph{Illustrative application scenario.}
In the IPO analysis case study (\cref{app:case_study}), Lumina Capital's Audit-Core agent lacks sufficient external evidence and routes subqueries to helper agents using public capability tags such as CorporateBank, MarketForecaster, and AssetManager.
The tag itself is not private, since Audit-Core already knows that a CorporateBank helper is being queried for credit risk evidence.
The privacy risk is that the returned sanitized snippet may reveal information beyond this public tag. Examples include residual company names, account or location identifiers, proprietary report templates, recurring credit assessment language, sector taxonomies, and other organizational fingerprints.
Such leakage could allow the requester to infer private properties of the helper's repository or internal business process even without seeing the raw document.
Our evaluations match these surfaces. PII metrics measure explicit identifier leakage in sanitized outputs and final answers. Stylometric attribution on sanitized text measures distributional fingerprints tied to the source. Representation and prototype attribution probes diagnose whether the learned role space and uploaded prototypes retain signals associated with the source that could support inference beyond public tags.
We do not claim protection against arbitrary reconstruction of a helper's full private corpus.

\paragraph{Training stage prototype observer.}
During federated training, the coordinator observes model updates and uploaded role prototypes but \emph{not} raw text or full local embeddings.
We do not aim to hide which public agent participates in training from the coordinator.
Instead, EXP-2 asks whether uploaded prototypes become persistent source fingerprints that reveal private distributional properties of an agent's repository beyond its public tag.
Because agents hold non-IID corpora, their prototypes can reflect distinctive document type mixtures, role and entity frequencies, and business process patterns. For example, a CorporateBank agent centered on counterparty risk and credit assessments may induce different role space directions than a MarketForecaster agent centered on equity forecasts and price targets.
If such directions remain stable across rounds, an observer can link training artifacts over time or across tasks and infer how an agent's private corpus is organized, without seeing raw text.
\textsc{DiSan} addresses this through adversarial training on prototypes, expressed by $\mathcal{L}_{\text{adv}}$, and Gaussian noise perturbation with $\sigma{=}0.01$ before upload. Together, these mechanisms suppress the directional signatures evaluated in EXP-2 (\cref{sec:attack_eval}).

\paragraph{Out-of-scope adversaries.}
We do not address active adversaries (malicious servers, poisoning, prompt injection) or collusion between multiple recipients and the coordinator.
These represent important directions for future work but are orthogonal to the sanitization objective studied here.

\subsection{Validation Scope}

\Cref{sec:threat} summarizes the validation goals in the main text.
The detailed protocols are reported in \cref{sec:main_results,sec:attack_eval}: PII metrics evaluate explicit identifier leakage, stylometry evaluates source signatures in transmitted text, and embedding/prototype attribution probes diagnose whether learned sharing artifacts carry persistent source information.

%% file: sections/appendix_method_details.tex
\section{Additional Method Details}
\label{app:method_details}

This appendix collects components omitted from the main paper for space, including (i) fusion/residual design,
(ii) expanded prototype objectives and prototype-level adversarial training, and (iii) the federated training procedure.

\subsection{Fusion and Residual Stabilization}
\label{app:fusion}

After fusing the role and style streams via \(\mathbf{H}_{\text{fused}} = g([\mathbf{Z}_r; \mathbf{Z}_s])\), we apply a residual path to stabilize training:
\begin{equation}
\mathbf{H}_{\text{out}} = \alpha \cdot \mathbf{H}_{\text{fused}} + (1-\alpha)\cdot \mathbf{H},
\label{eq:residual}
\end{equation}
where \(\alpha=\sigma(a)\in(0,1)\) is a learnable scalar gate initialized so that \(\alpha\) is small early in training.

For completeness, we provide the gradient decomposition induced by this residual gate.
Let \(\theta_{\text{enc}}\) denote the parameters of the pretrained encoder backbone (the component that produces \(\mathbf{H}\) from input tokens). Note that \(\theta_{\text{enc}}\) \emph{excludes} the fusion layer \(g\), gating parameters \(\{a, \alpha\}\), and projection heads \(\mathbf{W}_r, \mathbf{W}_s\). Under this convention:
\begin{equation}
\frac{\partial \mathcal{L}}{\partial \theta_{\text{enc}}} =
\frac{\partial \mathcal{L}}{\partial \mathbf{H}_{\text{out}}}
\left[
    \alpha \frac{\partial \mathbf{H}_{\text{fused}}}{\partial \mathbf{H}} + (1-\alpha) \mathbf{I}
\right]
\frac{\partial \mathbf{H}}{\partial \theta_{\text{enc}}}.
\label{eq:grad_decomp}
\end{equation}
The residual path helps maintain stable gradients early in training when the role/style streams are still adapting.

\subsection{Expanded Prototype Objectives}
\label{app:proto_details}

\paragraph{Batch role centroid.}
For each type \(k\in\mathcal{K}\), agent \(c\) computes the batch role centroid as a token-level average:
\begin{equation}
\bar{\mathbf{z}}_{r,k}^{(c)} = \frac{1}{|\Omega_k^{(c)}|}\sum_{(i,t)\in \Omega_k^{(c)}} \mathbf{z}^{(i)}_{r,t}, \quad
\Omega_k^{(c)} = \{(i,t) : \ell_{i,t} = k\}.
\label{eq:batch_role_centroid}
\end{equation}
If \(\Omega_k^{(c)}=\emptyset\), we skip type \(k\) in the corresponding loss term.

\paragraph{Decomposed alignment terms.}
The main paper uses the spherical cosine alignment in \cref{eq:proto_loss}.
Equivalently, one can view prototype alignment as matching (i) centroid distance, (ii) angular direction, and (iii) within-batch dispersion, defined per type \(k\in\mathcal{K}\):
\begin{align}
\mathcal{L}_{\text{align}}
&= \sum_{k\in\mathcal{K}} \left\| \bar{\mathbf{z}}_{r,k}^{(c)} - \boldsymbol{\mu}^*_{k} \right\|_2^2, \label{eq:align} \\
\mathcal{L}_{\text{cos}}
&= \sum_{k\in\mathcal{K}} \left(1 - \frac{\bar{\mathbf{z}}_{r,k}^{(c)\top} \boldsymbol{\mu}^*_{k}}{\|\bar{\mathbf{z}}_{r,k}^{(c)}\|_2 \cdot \|\boldsymbol{\mu}^*_{k}\|_2 + \epsilon}\right), \label{eq:cos} \\
\mathcal{L}_{\text{var}}
&= \sum_{k\in\mathcal{K}}\frac{1}{|\Omega_k^{(c)}|}\sum_{(i,t)\in\Omega_k^{(c)}}
\left\| \mathbf{z}_{r,t}^{(i)} - \bar{\mathbf{z}}_{r,k}^{(c)} \right\|_2^2, \label{eq:var} \\
\mathcal{L}_{\text{proto(full)}}
&= \mathcal{L}_{\text{align}} + \mathcal{L}_{\text{cos}} + \lambda_{\text{var}}\,\mathcal{L}_{\text{var}}. \label{eq:proto_full}
\end{align}
Here \(\Omega_k^{(c)}=\{(i,t):\ell_{i,t}=k\}\) is the token index set for type \(k\) in agent \(c\)'s batch (\cref{eq:batch_role_centroid}).
In practice, the spherical objective in \cref{eq:proto_loss} is a compact alternative that avoids redundant hyperparameters.

\paragraph{EMA prototype update.}
Each agent maintains an EMA prototype for each type:
\begin{equation}
\boldsymbol{\mu}^{(c)}_{k,s}
=\beta\,\boldsymbol{\mu}^{(c)}_{k,s-1}+(1-\beta)\,\bar{\mathbf{z}}^{(c)}_{r,k,s},
\label{eq:ema}
\end{equation}
where \(s\) indexes steps and \(\beta\in(0,1)\).

\paragraph{Spherical EMA.}
When using spherical alignment, we maintain the running prototype on the unit sphere:
\begin{equation}
\hat{\boldsymbol{\mu}}^{(c)}_{k,s}
=\mathrm{normalize}\!\left(\beta\,\hat{\boldsymbol{\mu}}^{(c)}_{k,s-1}+(1-\beta)\,\hat{\bar{\mathbf{z}}}^{(c)}_{r,k,s}\right),
\label{eq:spherical_ema}
\end{equation}
where \(\hat{\bar{\mathbf{z}}}^{(c)}_{r,k,s}=\mathrm{normalize}(\bar{\mathbf{z}}^{(c)}_{r,k,s})\).

\paragraph{Prototype adversarial training.}
As a diagnostic in the training stage, uploaded prototypes may carry persistent distributional signatures tied to individual sources beyond public agent tags.
We therefore train a prototype discriminator \(D_\psi\) to predict the source of uploaded prototypes, and train the encoder to \emph{fool} this discriminator using the GRL objective in \cref{eq:adv}.

To enable gradient flow despite EMA, we use an estimate that preserves gradients:
\begin{equation}
\bar{\boldsymbol{\mu}}^{(c)}_{k}
= (1-\eta)\,\mathrm{sg}(\boldsymbol{\mu}^{(c)}_{k})+\eta\,\bar{\mathbf{z}}^{(c)}_{r,k},
\label{eq:grad_proto}
\end{equation}
where \(\mathrm{sg}(\cdot)\) stops gradients and \(\eta\in[0,1]\). Note that \(\bar{\boldsymbol{\mu}}^{(c)}_{k}\) is used \emph{only for backpropagation} and is not uploaded to the server.

The discriminator is trained on the perturbed prototypes that are uploaded:
\begin{equation}
\mathcal{L}_{\text{disc}}
=\sum_{c=1}^{C}\sum_{k\in\mathcal{K}}
\mathrm{CE}\!\left(D_\psi(\tilde{\boldsymbol{\mu}}^{(c)}_{k}),\,c\right),
\label{eq:disc}
\end{equation}
where \(\tilde{\boldsymbol{\mu}}^{(c)}_{k}\) is the perturbed prototype uploaded to the server, as defined in \cref{eq:noise}.

On the client side, we apply GRL to the estimate that preserves gradients and optimize the adversarial loss:
\begin{equation}
\mathcal{L}_{\text{adv}}
=\sum_{k\in\mathcal{K}}
\mathrm{CE}\!\left(D_\psi(\mathrm{GRL}_\gamma(\bar{\boldsymbol{\mu}}^{(c)}_{k})),\,c\right),
\label{eq:proto_adv}
\end{equation}
where \(\gamma\) is the GRL strength used in \cref{eq:adv}. The adversarial loss uses \(\bar{\boldsymbol{\mu}}^{(c)}_{k}\) to ensure gradient flow, while the discriminator \(D_\psi\) is trained on the perturbed uploaded prototypes \(\tilde{\boldsymbol{\mu}}^{(c)}_{k}\) to match the actual attack surface.

\subsection{Model Architecture Details}
\label{app:arch_details}

\paragraph{Backbone.}
\textsc{DiSan} uses LongT5-TGlobal-Base as its backbone.
The encoder combines local sliding-window attention with dynamically constructed global tokens, enabling linear-complexity modeling of sequences up to 16,384 tokens while preserving long-range context.
The backbone hidden size is denoted \(d_{\text{enc}}\); in our implementation the maximum input length is 1,536 tokens.
The decoder is the LongT5 Transformer decoder and generates sanitized text autoregressively from modified encoder states. Thus \textsc{DiSan} does not replace the pretrained decoder with a separate generator; it inserts a role--style bottleneck between the pretrained encoder and decoder.

\paragraph{Projection heads and discriminator.}
LoRA adapters with rank 8 are applied to the attention Q and V projections of the encoder, adding approximately 0.9M trainable parameters.
Given encoder states \(\mathbf{H}\in\mathbb{R}^{T\times d_{\text{enc}}}\), the role and style projection heads are linear layers followed by dropout with \(p=0.1\) that map each token to 256-dimensional subspaces:
$\mathbf{W}_r, \mathbf{W}_s \in \mathbb{R}^{d_{\text{enc}} \times 256}$ (\cref{eq:role_proj,eq:style_proj}).
The two streams are concatenated and mapped back to \(d_{\text{enc}}\) by a linear fusion layer \(g:\mathbb{R}^{512}\to\mathbb{R}^{d_{\text{enc}}}\).
We use the projected-residual variant by default: the original encoder state is first compressed through the same 512-dimensional bottleneck and projected back to \(d_{\text{enc}}\), then mixed with the fused representation using a learnable scalar initialized to 0.5. This preserves generation stability without passing raw encoder states directly to the decoder.

\paragraph{Adversarial classifiers.}
All source attribution discriminators use the same MLP template, input \(\to 128 \to 128 \to C\), with ReLU activations and dropout 0.1 after each hidden layer; \(C=7\) in our experiments.
The role discriminator takes mean-pooled role embeddings with 256 dimensions, and the fused discriminator takes mean-pooled fused encoder states with dimension \(d_{\text{enc}}\). Both are trained through a gradient reversal layer to discourage signals tied to individual sources from being encoded in the shareable role stream or in the final decoder input.
The prototype discriminator \(D_\psi\) uses the same MLP architecture with a 256-dimensional input and is trained on uploaded role prototypes; the client-side adversarial loss applies GRL to a prototype estimate with gradients so that local updates make prototypes less predictive of their source agent.

\subsection{Federated Training Details}
\label{app:training_details}

\paragraph{Parameter-efficient fine-tuning.}
We employ LoRA~\cite{hu2022lora} for communication-efficient federated training.
Specifically, we freeze the pretrained encoder backbone and apply low-rank adapters only to the attention query and value projections (target modules: \texttt{q}, \texttt{v}).
The trainable parameters include:
(i) LoRA adapters ($\approx$0.9M parameters);
(ii) role projection $\mathbf{W}_r$ and style projection $\mathbf{W}_s$;
(iii) fusion layer $g(\cdot)$ and residual components;
(iv) adversarial classifiers.
Note that when using LoRA, the gradient decomposition in \cref{eq:grad_decomp} flows only through the LoRA-adapted attention layers (not the frozen FFN and normalization layers), while the projection heads $\mathbf{W}_r$, $\mathbf{W}_s$ receive full gradients.

\paragraph{Communication cost.}
LoRA substantially reduces communication cost compared to full-model synchronization.
The pretrained encoder backbone is frozen; only LoRA adapters (applied to attention Q/V projections) and the disentanglement heads are trainable.
During each round, agents upload only trainable parameters: LoRA adapters ($\approx$0.9M parameters, $\approx$5\,MB), role projection head $\mathbf{W}_r$, and fusion layer $g$. Style projection weights $\mathbf{W}_s$ remain strictly local and are never transmitted. In addition, each agent uploads and downloads a set of role prototypes of size \(|\mathcal{K}|\times d_r\) per round (256-dimensional centroids for $|\mathcal{K}|$ entity types), which is negligible ($\approx$1\,KB) compared to model synchronization.

\paragraph{Convergence analysis.}
\label{app:theory_convergence}
Our method builds on FedProx~\cite{li2020federated}, which incorporates a proximal term $\frac{\nu}{2} \|\theta - \theta^*\|^2$ to control client drift under non-IID data. Under standard assumptions (L-smoothness, bounded gradient variance), FedProx guarantees convergence to a stationary point~\cite{li2020federated}. Our additional loss components ($\mathcal{L}_{\text{orth}}$, $\mathcal{L}_{\text{proto}}$) are smooth regularizers that preserve this guarantee. The adversarial component $\mathcal{L}_{\text{adv}}$ introduces additional complexity; we rely on empirical validation of stable convergence through training curves and loss monitoring across all federated rounds.

\paragraph{FedProx.}
We add a proximal regularizer during updates:
\begin{equation}
\mathcal{L}_{\text{prox}}=\frac{\nu}{2}\left\|\theta-\theta^*\right\|_2^2.
\label{eq:prox}
\end{equation}

\paragraph{Prototype perturbation.}
Before uploading, agents may apply \(\ell_2\)-normalization and add Gaussian noise:
\begin{equation}
\tilde{\boldsymbol{\mu}}^{(c)}_{k}
=\mathrm{normalize}\!\left(\boldsymbol{\mu}^{(c)}_{k}+\mathcal{N}(\mathbf{0},\sigma_{\text{noise}}^2\mathbf{I})\right).
\label{eq:noise}
\end{equation}

\paragraph{Pseudocode.}
Algorithm~\ref{alg:training} summarizes the end-to-end federated training procedure.

\begin{algorithm}[tb]
\caption{\textsc{DiSan} Training}
\label{alg:training}
\begin{algorithmic}
\STATE {\bfseries Input:} Agents \(\{1,\ldots,C\}\) with local data \(\{\mathcal{D}_c\}\), rounds \(R\), local steps \(K\)
\STATE Initialize global model \(\theta^*\) and prototype discriminator \(D_\psi\)
\STATE Initialize global prototypes \(\{\boldsymbol{\mu}^*_{k}\}_{k\in\mathcal{K}}\) (e.g., zeros)
\FOR{\(\rho = 1\) {\bfseries to} \(R\)}
    \STATE Broadcast \(\theta^*\), \(D_\psi\), and \(\{\boldsymbol{\mu}^*_{k}\}_{k\in\mathcal{K}}\) to all agents
    \FOR{agent \(c = 1\) {\bfseries to} \(C\) \textbf{in parallel}}
        \STATE \(\theta^{(c)} \gets \theta^*\)
        \FOR{\(s = 1\) {\bfseries to} \(K\)}
            \STATE Sample a batch from \(\mathcal{D}_c\)
            \STATE Compute \(\mathcal{L}^{(c)}_{\text{local}}\) (\cref{eq:total_loss})
            \STATE Update \(\theta^{(c)}\) via gradient descent
            \STATE Update running prototypes \(\{\boldsymbol{\mu}^{(c)}_{k}\}_{k\in\mathcal{K}}\) (\cref{eq:ema} or \cref{eq:spherical_ema})
        \ENDFOR
        \STATE Optional perturbation: \(\tilde{\boldsymbol{\mu}}^{(c)}_{k} \gets \mathrm{perturb}(\boldsymbol{\mu}^{(c)}_{k})\ \forall k\in\mathcal{K}\)
        \STATE Upload \(\big(\theta^{(c)}, \{\tilde{\boldsymbol{\mu}}^{(c)}_{k}\}_{k\in\mathcal{K}}, \{N_{c,k}\}_{k\in\mathcal{K}}\big)\) to server
    \ENDFOR
    \STATE \(\theta^* \gets \texttt{weighted\_avg}(\{\theta^{(c)}\}_{c=1}^{C})\)
    \FOR{\(k\in\mathcal{K}\)}
        \STATE \(\boldsymbol{\mu}^*_{k} \gets
        \dfrac{\sum\limits_{c=1}^{C} N_{c,k}\,\tilde{\boldsymbol{\mu}}^{(c)}_{k}}
              {\sum\limits_{c=1}^{C} N_{c,k}}\)
    \ENDFOR
    \STATE Train \(D_\psi\) on \(\{(\tilde{\boldsymbol{\mu}}^{(c)}_{k}, c)\}_{c,k}\) using \cref{eq:disc}
\ENDFOR
\STATE {\bfseries Return} \(\theta^*\)
\end{algorithmic}
\end{algorithm}

%% file: sections/appendix_experimental_supplement.tex
\section{Experimental Supplement}
\label{app:exp_details}

\subsection{Attack Evaluation}
\label{sec:attack_eval}

We empirically evaluate privacy on three surfaces: sanitized output text, learned representations, and uploaded prototypes, as shown in \cref{tab:attack_embed_proto,tab:attack_stylometry}.
The primary surface in the application stage is sanitized text received by another agent.
Capability tags and participation are public in our threat model, so the 7-way probes are not meant to hide the helper capability tag itself.
They serve as diagnostics for whether sanitized text, role representations, or uploaded prototypes still carry residual source-correlated distributional fingerprints beyond that public information.
Style representations remain strictly local and are reported only as a sanity check that variation identifying agents has been isolated into the local stream.

\paragraph{EXP-1: Embedding Attribution.}
We employ an SVM classifier with RBF kernel to probe whether embeddings retain source-correlated fingerprints beyond public tags.
For \textbf{role embeddings}, which form the internal stream from which sanitized text and prototypes are derived, the SVM achieves close to random performance, with F1$=$0.05 and Acc$=$0.18. This indicates successful privacy protection.
As a disentanglement sanity check, \textbf{style embeddings} remain strictly local and are never transmitted. They achieve high accuracy, with Acc$=$0.89 and F1$=$0.84, confirming that $\mathcal{L}_{\text{orth}}$ isolates source-correlated variation into the local style stream.
This contrast validates the design. Style carries agent fingerprints but never crosses the privacy boundary, while only sanitized text $\tilde{d}$ derived from role representations is transmitted.
Matched distribution tests confirm robustness: role F1 remains at 0.05 while style F1 stays at 0.80.

\paragraph{EXP-2: Prototype Attribution.}
We test whether uploaded role prototypes expose stable fingerprints tied to individual sources beyond public agent tags through two probes.
\textbf{Sample-to-Proto} predicts which client prototype is closest to a training sample embedding.
\textbf{Bootstrap-Proto} trains a probe on local client data and tests it on uploaded prototypes.
Both achieve close to random performance, with Acc$=$0.14--0.16 and F1$\approx$0.09. The 95\% confidence intervals include zero, indicating that prototype defenses, including normalization, noise perturbation, and adversarial training, suppress persistent prototype signatures.
Cross-round linkage attacks train on round-1 prototypes and test on round-12 prototypes. They achieve F1$=$0.10, confirming that temporal linkage is also ineffective.

\paragraph{EXP-3: Text Stylometry.}
We evaluate stylometric leakage in sanitized text using two complementary probes: (i) TF-IDF features (5000 dimensions, unigrams + bigrams) with classical classifiers (Logistic Regression, LinearSVC, Random Forest); and (ii) a neural encoder probe (\texttt{all-MiniLM-L6-v2})~\cite{reimers2019sentencebert,wang2020minilm} capturing deeper contextual patterns beyond surface n-grams.
These probes use source labels as a diagnostic signal for residual distributional fingerprints, rather than as a claim that public capability tags are hidden.

\textbf{EXP-3a (Synthetic Finance):} The full 7-way test shows high F1 ($\approx$0.90) for both raw and sanitized text, reflecting document-type confounding rather than stylistic signals.
Controlled binary tests isolating stylometric variation show attribution F1 of 0.51--0.56 (sanitized) vs.\ 0.54--0.57 (raw), both near the 0.50 random baseline, suggesting limited intrinsic style variation in the synthetic data.

\textbf{EXP-3b (Enron Emails, External Validation):} To validate on real-world data with genuine author variation, we evaluate on the Enron email corpus~\cite{klimt2004enron} (7 authors, 500 emails each; details in \cref{app:enron_details}).
On raw emails, TF-IDF achieves 82.5\% F1 and the BERT probe achieves 69.1\% F1 (vs.\ 14.3\% random baseline).

We also run stylometric attribution after placeholder masking with each PII detector from our baselines, to directly test whether token-level removal resolves distributional leakage.
Even GLiNER, the strongest detector (masking 19.2\% of tokens), reduces TF-IDF attribution to 67.2\% (18.6\% reduction) and BERT attribution to 61.8\% (10.6\% reduction).
After \textsc{DiSan} sanitization, TF-IDF drops to 22.1\% (\textbf{73.2\% reduction}) and the BERT probe drops to 20.3\% (\textbf{70.6\% reduction}), both approaching the random baseline.
The gap confirms that distributional signatures persist even after aggressive masking, and that a representation-level approach is necessary to suppress them.

\textbf{EXP-3c (Authorship Obfuscation Baselines):}
We compare against two recent authorship obfuscation methods on the identical Enron setup: JAMDEC~\cite{fisher-etal-2024-jamdec}, which applies constrained decoding over a small language model to suppress author-specific tokens, and StyleRemix~\cite{fisher-etal-2024-styleremix}, which perturbs fine-grained style elements via LoRA modules in two modes (Fixed: uniform style target; Adaptive: per-author target).
Results are in \cref{tab:attack_obfuscation}.

JAMDEC barely moves the attribution needle (TF-IDF F1: $0.825\to0.816$, 1.2\% reduction), suggesting that constrained decoding without explicit disentanglement leaves distributional fingerprints intact.
StyleRemix Fixed achieves 11.5\% TF-IDF reduction but only 3.4\% under the neural BERT probe, indicating that surface n-gram perturbation does not fully remove contextual attribution cues.
StyleRemix Adaptive \emph{backfires}: by steering each author toward a distinct style target it introduces new per-author signatures, increasing TF-IDF F1 to 0.970 and BERT F1 to 0.845, both above the raw baseline.
\textsc{DiSan} delivers 73.2\% TF-IDF reduction and 70.6\% BERT reduction, approximately $60\times$ greater than JAMDEC and $6\times$ greater than the best StyleRemix variant, by enforcing explicit role--style disentanglement rather than surface-level rewriting.

\begin{table}[!htb]
\caption{\textbf{EXP-3c: Authorship obfuscation baselines on Enron Emails (7-way).}
Reduction is relative to the 0.825 TF-IDF / 0.691 BERT raw baseline. Negative values indicate worse-than-raw attribution (backfire). Random baseline: F1$=$0.143.}
\label{tab:attack_obfuscation}
\vskip 0.05in
\begin{center}
\begin{small}
\setlength{\tabcolsep}{4pt}
\renewcommand{\arraystretch}{1.05}
\begin{tabular}{@{}lcccc@{}}
\toprule
Method & TF-IDF F1 & TF-IDF Red. & BERT F1 & BERT Red. \\
\midrule
Raw text              & 0.825 & --       & 0.691 & --       \\
JAMDEC                & 0.816 & 1.2\%    & 0.696 & $-$0.6\% \\
StyleRemix Fixed      & 0.737 & 11.5\%   & 0.664 & 3.4\%    \\
StyleRemix Adaptive   & 0.970 & $-$20.8\% & 0.845 & $-$27.9\% \\
\textsc{DiSan} (ours) & \textbf{0.221} & \textbf{73.2\%} & \textbf{0.203} & \textbf{70.6\%} \\
Random baseline       & 0.143 & --       & 0.143 & --       \\
\bottomrule
\end{tabular}
\end{small}
\end{center}
\vskip -0.1in
\end{table}

\paragraph{Attack model scope.}
Our evaluation employs a principled hierarchy: (i) classical stylometry (TF-IDF + linear classifiers)~\cite{stamatatos2009survey} for reproducibility; (ii) neural encoder probes (\texttt{all-MiniLM-L6-v2}) capturing deeper contextual patterns; and (iii) dedicated authorship obfuscation baselines (JAMDEC, StyleRemix) that directly target distributional fingerprints.
Consistency across all levels (F1$\approx$0.05 for embeddings, 73.2\% stylometric reduction on Enron, $6\text{--}60\times$ greater reduction than obfuscation baselines) provides converging evidence of effective privacy protection under the honest-but-curious threat model.
Stronger attacks (LLM-based attribution, adaptive adversaries with repeated queries) remain important future directions but exceed typical honest-but-curious capabilities.

\begin{table*}[!htb]
\caption{\textbf{Attack evaluation: Embedding and Prototype Attribution (EXP-1, EXP-2).} Random baseline: Acc$=$F1$=$0.14.}
\label{tab:attack_embed_proto}
\vskip 0.1in
\begin{center}
\begin{small}
\begin{minipage}[t]{0.46\textwidth}
\centering
\setlength{\tabcolsep}{5pt}
\begin{tabular}{@{}lcccc@{}}
\toprule
\multicolumn{5}{c}{\textbf{EXP-1: Embedding Attribution (SVM, 7-way)}} \\
\midrule
Embedding & Acc & F1 & Match F1 & Note \\
\midrule
\texttt{role} (shared)  & 0.18 & \textbf{0.05} & 0.05 & Near-random \\
\texttt{style} (local)  & 0.89 & 0.84 & 0.80 & Validated \\
\bottomrule
\end{tabular}
\end{minipage}%
\hfill
\begin{minipage}[t]{0.46\textwidth}
\centering
\setlength{\tabcolsep}{4pt}
\begin{tabular}{@{}lcccc@{}}
\toprule
\multicolumn{5}{c}{\textbf{EXP-2: Prototype Attribution}} \\
\midrule
Attack & Acc & F1 & Boot Acc & 95\% CI \\
\midrule
Sample-to-Proto     & 0.16 & 0.09 & --            & -- \\
Bootstrap-Proto     & 0.14 & 0.07 & 0.13$\pm$0.13 & [0.00, 0.43] \\
\bottomrule
\end{tabular}
\end{minipage}
\end{small}
\end{center}
\vskip -0.15in
\end{table*}

\begin{table*}[!htb]
\caption{\textbf{Attack evaluation: Text Stylometry (EXP-3a, EXP-3b).} TF-IDF and BERT probes on synthetic finance and Enron emails. EXP-3b also includes placeholder-masking baselines to directly test whether token-level removal resolves distributional leakage. Random baseline: F1$=$0.143.}
\label{tab:attack_stylometry}
\vskip 0.1in
\begin{center}
\begin{small}
\begin{minipage}[t]{0.38\textwidth}
\centering
\textbf{EXP-3a: Synthetic Finance (7-way)}
\vskip 0.05in
\setlength{\tabcolsep}{5pt}
\begin{tabular}{@{}lccc@{}}
\toprule
Setting & Best F1 & Random & Controlled \\
\midrule
Raw (TF-IDF)            & 0.90 & 0.14 & 0.54--0.57 \\
Raw (BERT)              & 0.87 & 0.14 & 0.52--0.55 \\
\textsc{DiSan} (TF-IDF) & 0.89 & 0.14 & 0.51--0.56 \\
\textsc{DiSan} (BERT)   & 0.86 & 0.14 & 0.49--0.53 \\
\bottomrule
\end{tabular}
\end{minipage}%
\hfill
\begin{minipage}[t]{0.58\textwidth}
\centering
\textbf{EXP-3b: Enron Emails (7-way)}
\vskip 0.05in
\setlength{\tabcolsep}{4pt}
\begin{tabular}{@{}lcccc@{}}
\toprule
Method & Mask\% & TF-IDF F1 & BERT F1 & TF-IDF Red. \\
\midrule
Raw text                & --    & 0.825 & 0.691 & --              \\
+Piiranha               & 5.3\% & 0.785 & 0.669 & 4.9\%           \\
+DeBERTa                & 17.5\% & 0.744 & 0.649 & 9.9\%          \\
+GLiNER                 & 19.2\% & 0.672 & 0.618 & 18.6\%         \\
\textsc{DiSan}          & --    & \textbf{0.221} & \textbf{0.203} & \textbf{73.2\%} \\
Random                  & --    & 0.143 & 0.143 & --              \\
\bottomrule
\end{tabular}
\end{minipage}
\end{small}
\end{center}
\vskip -0.15in
\end{table*}

\subsection{Attack Evaluation Details}
\label{sec:attack_details}
For all embedding attribution attacks (EXP-1), we use Support Vector Machines (SVM) with RBF kernel.

For prototype attribution (EXP-2), we evaluate two attacks:
\begin{itemize}
    \item \textbf{Sample-to-Proto}: For each test sample, compute its role embedding centroid and measure cosine similarity to each client's global prototype; predict the client with highest similarity.
    \item \textbf{Bootstrap-Proto (MLP)}: Train an MLP classifier on prototype embeddings from training rounds and evaluate on held-out rounds, testing cross-round linkability.
\end{itemize}

\subsection{Enron Email Experiment Details}
\label{app:enron_details}

To validate stylometric protection on real-world data with genuine author variation, we conduct an external evaluation on the Enron email corpus~\cite{klimt2004enron}. This dataset contains approximately 500,000 emails from 150 Enron employees, released during the 2001 federal investigation. It is widely used as a benchmark for authorship attribution and email classification research.

\paragraph{Data selection and preprocessing.}
We select the top 7 senders by email volume to match the number of agents in our main experiments:
\texttt{kaminski-v} (20,123 emails), \texttt{mann-k} (16,891), \texttt{dasovich-j} (16,359), \texttt{jones-t} (15,491), \texttt{kean-s} (15,352), \texttt{shackleton-s} (14,076), and \texttt{farmer-d} (9,869).
For each sender, we randomly sample 500 emails (stratified), yielding 3,500 total samples.

Preprocessing steps:
\begin{itemize}
    \item Extract email body by removing headers (To, From, Subject, Date, etc.)
    \item Remove forwarded message markers and quoted reply sections
    \item Filter emails by body length: minimum 100 characters, maximum 2,000 characters
\end{itemize}

\paragraph{Feature extraction.}
We use TF-IDF vectorization with the following parameters:
\begin{itemize}
    \item Maximum features: 5,000
    \item N-gram range: unigrams and bigrams
    \item Minimum document frequency: 2
    \item Maximum document frequency: 0.95
\end{itemize}

\paragraph{Classification.}
We evaluate three classifiers commonly used in stylometry research:
\begin{itemize}
    \item \textbf{Logistic Regression}: L2 regularization, max iterations = 1,000
    \item \textbf{LinearSVC}: Linear kernel SVM, max iterations = 1,000
    \item \textbf{Random Forest}: 100 estimators
\end{itemize}

For stronger attack evaluation, we also employ a pre-trained transformer encoder (\texttt{all-MiniLM-L6-v2}) to generate 384-dimensional sentence embeddings, followed by SVM-RBF classification. This neural probe captures deeper contextual patterns beyond surface-level n-grams.

Data is split into 70\% training and 30\% test sets using stratified sampling (random seed = 42). We report the best F1 (macro) across all classifiers.

\paragraph{Sanitization.}
Raw emails are processed through the trained \textsc{DiSan} model using greedy decoding (beam size = 1) with the task prefix \texttt{``deidentify:''}. Both TF-IDF and BERT pipelines are then applied to the sanitized outputs to evaluate residual stylometric leakage.

\paragraph{Results interpretation.}
The 82.5\% F1 on raw emails versus 14.3\% random baseline confirms that Enron emails exhibit strong, distinguishable stylistic patterns across senders, unlike our synthetic finance dataset where controlled tests showed near-random attribution even on raw text.

Placeholder masking offers diminishing returns: even GLiNER, the strongest detector at 19.2\% token removal, reduces TF-IDF F1 to only 67.2\% (18.6\% reduction). This directly demonstrates that source-identifying signals are distributed across the text rather than concentrated in explicit identifiers.

\textsc{DiSan} reduces TF-IDF F1 to 22.1\% (\textbf{73.2\% reduction}, $0.825\to0.221$), nearly 4$\times$ the best masking baseline, demonstrating effective removal of author-identifying patterns when they exist in the source data.

The BERT probe (SVM-RBF on transformer embeddings) yields lower raw F1 (69.1\%) than TF-IDF, suggesting that Enron stylometry relies more on surface-level n-gram patterns than deep semantic structure. Nevertheless, \textsc{DiSan} still achieves 70.6\% reduction ($0.691\to0.203$), showing that the reduction also holds under a neural attribution probe.

\subsection{Agent construction and non-IID statistics}
\label{app:client_split}
We construct agents by doc-type as described in \cref{sec:experiments}.
This induces (i) quantity skew (agents have different numbers of documents), (ii) label skew (doc-type inventories differ by design), and (iii) feature skew (document length and entity-type distributions differ across agents).
In our threat model (\cref{sec:threat}), capability tags (agent identities such as ``AssetManager'') are treated as public; therefore, our privacy evaluation focuses on leakage \emph{beyond} these public priors.

\cref{fig:non_iid} visualizes these heterogeneity patterns across the seven agents.

\begin{figure*}[tb]
\vskip 0.2in
\begin{center}
\centerline{\includegraphics[width=0.85\textwidth]{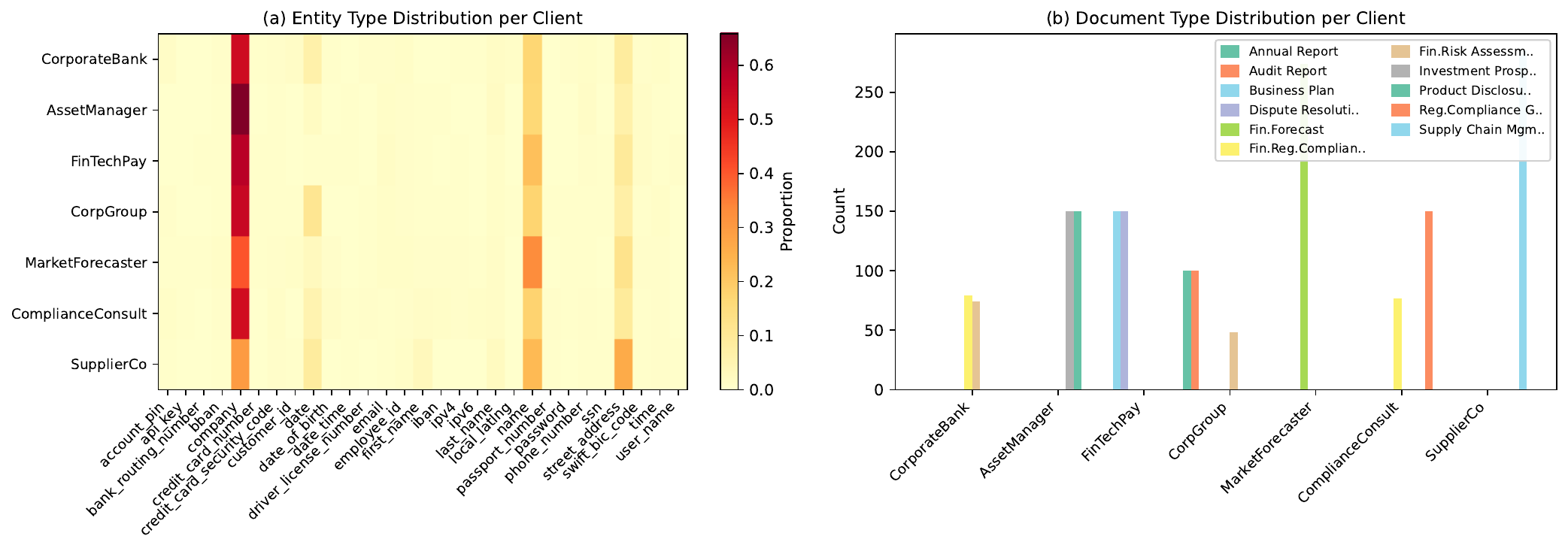}}
\caption{Non-IID data heterogeneity across agents.
(a) Entity type distribution (normalized) shows different agents emphasize different PII categories.
(b) Document type distribution confirms disjoint doc-type inventories by design.}
\label{fig:non_iid}
\end{center}
\vskip -0.2in
\end{figure*}

\subsection{Agent doc-type inventories and hyperparameters}
\label{app:doc_types}

\cref{tab:doc_types_hyper} lists the exact doc-type identifiers used to construct each agent and the complete hyperparameter set for reproducibility.

\begin{table*}[tb]
\caption{(Left) Agent doc-type inventories for \(C{=}7\) agents. (Right) Hyperparameters and schedules.}
\label{tab:doc_types_hyper}
\vskip 0.15in
\begin{center}
\begin{small}
\begin{minipage}[t]{0.48\textwidth}
\centering
\setlength{\tabcolsep}{3pt}
\renewcommand{\arraystretch}{1.05}
\begin{tabular}{@{}
  >{\raggedright\arraybackslash}p{0.32\textwidth}
  >{\raggedright\arraybackslash}p{0.62\textwidth}
@{}}
\toprule
\textbf{Agent} & \textbf{Doc-type identifiers} \\
\midrule
CorporateBank &
\texttt{Financial\_Regulatory\_Compliance \_Report};
\texttt{Financial\_Risk\_Assessment}. \\
AssetManager &
\texttt{Investment\_Prospectus};
\texttt{Product\_Disclosure\_Statement}. \\
FinTechPay &
\texttt{Business\_Plan};
\texttt{Dispute\_Resolution\_Policy}. \\
CorpGroup &
\texttt{Annual\_Report};
\texttt{Audit\_Report};
\texttt{Financial\_Risk\_Assessment}. \\
MarketForecaster &
\texttt{Financial\_Forecast}. \\
ComplianceConsult &
\texttt{Financial\_Regulatory\_Compliance \_Report};
\texttt{Regulatory\_Compliance\_Guide}. \\
SupplierCo &
\texttt{Supply\_Chain\_Management\_Agreement}. \\
\bottomrule
\end{tabular}
\end{minipage}%
\hfill
\begin{minipage}[t]{0.48\textwidth}
\centering
\setlength{\tabcolsep}{4pt}
\renewcommand{\arraystretch}{1.05}
\begin{tabular}{@{}ll@{}}
\toprule
\multicolumn{2}{@{}l}{\textbf{Federated training}} \\
\midrule
Rounds & 12 \\
Local steps per round & 300 \\
Batch size & 4 \\
Learning rate & \(2\times 10^{-4}\) \\
FedProx \(\nu\) & 0.1 \\
\midrule
\multicolumn{2}{@{}l}{\textbf{Loss weights and schedules}} \\
\midrule
\(\lambda_{\text{adv}}\) & 1.0 \\
GRL strength \(\gamma\) & 0.5 \\
\(\lambda_{\text{orth}}\) & 0.2 \\
\(\lambda_{\text{p}}\) (prototype alignment) & 1.0 \\
Proto alignment warmup & 30 steps (round 2) \\
Prototype discriminator steps/round & 200 \\
Prototype noise scale \(\sigma\) & 0.01 \\
\bottomrule
\end{tabular}
\end{minipage}
\end{small}
\end{center}
\vskip -0.1in
\end{table*}

\subsection{Implementation and Evaluation Details}
\label{app:impl_details}

\paragraph{Model architecture.}
\textsc{DiSan} uses LongT5-TGlobal-Base as its backbone; full architecture details are in \cref{app:arch_details}.

\paragraph{RAG evaluation: query and ground-truth construction.}
\label{app:rag_construction}
We construct the RAG benchmark from the sanitized/re-written document records rather than generating free-form queries from scratch.
Each input JSONL record contains a document identifier, domain, document type, PII annotations, and a \texttt{rewritten\_text} field used as the retrieval corpus.
The construction pipeline is:
\begin{enumerate}
    \item \textbf{Chunking.} Documents are split into sentence-aware chunks with a 256-token target length and 50-token overlap. Each chunk keeps its \texttt{uid}, document type, source file, sample index, chunk index, and a stable \texttt{chunk\_id}. Chunks are grouped by document type for generation and by agent identifier for retrieval.
    \item \textbf{Anchor extraction.} For every chunk, a schema-guided LLM prompt extracts retrieval-oriented anchors: role hooks, topic/procedure hooks, deadlines, required items, logic gates, regulations, temporal buckets, and a short summary of the local business rule. The prompt requires anchors to be supported by verbatim or near-verbatim evidence from the chunk and discourages PII-bearing names or addresses unless they are essential to the rule.
    \item \textbf{Grounded QA synthesis.} We sample anchor-annotated chunks with a fixed random seed and ask the LLM to generate one focused question, a concise ground-truth answer, and evidence snippets for each sampled chunk. The question must be answerable using only that chunk; the answer records decision factors such as role, requirement/deadline, and logic gate when they are explicitly present.
    \item \textbf{Validation and provenance.} Generated examples are discarded if the evidence is not found in the source chunk or if the query/answer introduces known unsupported concepts. Each retained record stores the originating \texttt{chunk\_id}, merged anchors, evidence spans, and document metadata, so ChunkHit@3 can be computed against the true retrieval target.
\end{enumerate}
This procedure yields a JSONL file of grounded QA records and a separate set of per-agent context files used by the retrieval services. The ground-truth answers are therefore anchored to verbatim source spans, while the evaluation query is natural language and may require the retriever to recover the correct chunk from an agent's local index.

\paragraph{Retrieval architecture.}
Each agent maintains a private local index built from its own context file; raw documents are not centralized for retrieval.
At evaluation time, the requester first selects candidate helper agents using the public capability tags described in \cref{sec:routing}. Each selected helper executes the same local retrieval stack over its own chunks and then applies the evaluated sharing policy (raw sharing, placeholder masking, paraphrasing, policy gating, or \textsc{DiSan}) before transmitting evidence to the requester.
The local retrieval stack uses a three-stage hybrid pipeline based on BGE-M3.
Stage 1 computes candidate sets from dense semantic embeddings, learned sparse lexical vectors, and ColBERT-style late-interaction vectors.
Stage 2 normalizes and fuses the candidate scores, with adaptive weights that increase the sparse component for keyword-rich queries.
Stage 3 re-ranks the fused candidates with \texttt{bge-reranker-v2-m3}; the final top-\(k\) chunks can be expanded with neighboring chunks from the same document to provide broader context for answer generation.
The requester deduplicates returned chunks by \texttt{chunk\_id}, keeps the highest-scoring evidence up to the evaluation budget, and prompts the answer model to respond only from the received sanitized context.

\paragraph{Artifact licenses and terms.}
We use publicly available datasets, models, and evaluation tools under their respective licenses or terms of use, including the synthetic finance corpus, Enron email corpus, GLiNER, BGE-M3, LongT5, and open-source LLM baselines.
We do not redistribute restricted raw data; released code and derived artifacts are intended for research use.

\paragraph{PII detector for leakage metrics.}
\label{app:pii_detector_details}
For Avg.\ PII and Ans.\ Rate in \cref{tab:main_results}, we evaluate the text exposed to the requester using \texttt{gliner-pii-large-v1.0}~\cite{zaratiana-etal-2024-gliner}, a generalist named-entity detector configured with a PII label set covering names, first/last names, email addresses, phone numbers, street/location addresses, city/state/zip, credit-card and bank-account numbers, SSNs, dates of birth, dates, company/organization names, usernames, IP addresses, URLs, passport numbers, and driver-license numbers.
We use a confidence threshold of 0.3 and count non-overlapping detected spans.
Avg.\ PII is the mean number of detected PII spans per shared/retrieved chunk after the method under evaluation has been applied.
Ans.\ Rate is the percentage of generated final answers for which the same detector finds at least one PII span.
The detector is used only for evaluation; training still relies on the dataset's annotated spans and token-level labels, with optional detector-derived masks used only as auxiliary signals when available.

\paragraph{Utility metrics: cosine similarity.}
The cosine similarity reported in \cref{tab:main_results} uses a bag-of-words term-frequency representation rather than neural embeddings, serving as a lightweight lexical similarity measure that complements token-level F1.
This is distinct from the 768-dimensional BGE-M3 dense embeddings used in retrieval.

\subsection{Baselines: prompts and policies}
\label{app:baseline_details}
For LLM paraphrasing, we apply a privacy-focused prompt that (i) removes or generalizes PII, (ii) preserves relational semantics needed for grounding, and (iii) avoids source-identifying formatting.
For policy gating, following dynamic access-control memory sharing~\cite{rezazadeh2025collaborative}, the policy model decides per chunk based on the requester's agent tag and the helper's data sensitivity level, outputting one of three actions: \emph{share} (return original text), \emph{share summary} (return a LLM-generated summary), or \emph{refuse} (return nothing).
\paragraph{LLM paraphrasing prompt.}
The following prompt is used solely for the RAG utility evaluation reported in \cref{tab:main_results} and discussed in \cref{sec:main_results}, where LLM paraphrasing serves as a PII-removal baseline:

\begin{quote}
\small
\textit{You are an assistant that rewrites English financial or compliance-related text to remove personally identifiable information (PII) while preserving all task-relevant content.}

\textit{Your task is to produce a clear, natural-sounding rewritten version that protects individual privacy while retaining the informational value of the text. The rewritten text should preserve financial metrics, business relationships, temporal context, and domain-specific details that are important for downstream tasks.}

\textit{Guidelines:}
\begin{itemize}
    \item \textit{Remove or replace explicit PII such as personal names, phone numbers, email addresses, account numbers, and physical addresses.}
    \item \textit{Preserve important non-private information including: financial figures, percentages, growth rates, industry terms, product categories, and general business context.}
    \item \textit{Keep temporal references (e.g., ``Q3 2023'', ``fiscal year'') and geographic regions when they provide useful context without identifying individuals.}
    \item \textit{Maintain the logical structure, professional tone, and factual accuracy of the original text.}
    \item \textit{When generalizing, prefer minimal changes that protect privacy while maximizing retained information.}
\end{itemize}

\textit{Return only the rewritten text. Do not include explanations, examples, or commentary.}
\end{quote}

\paragraph{RAG question-answering prompt.}
For the downstream RAG evaluation, we use the following prompt to ensure the answering model relies strictly on retrieved (sanitized) evidence:

\begin{quote}
\small
\textit{You are a knowledgeable assistant that answers questions strictly based on the provided context.}

\textit{Instructions:}
\begin{enumerate}
    \item \textit{Answer ONLY using information from the provided context. Do not use external knowledge.}
    \item \textit{If the context lacks sufficient information, respond: ``I cannot answer this based on the provided context.''}
    \item \textit{Keep your answer concise, accurate, and directly relevant to the question.}
\end{enumerate}
\end{quote}

\subsection{Ablation Results}
\label{app:ablation_results}

The ablation results in \cref{tab:ablation_single_round_est1} (main paper) demonstrate the importance of each component.
Removing style isolation (A1, A2) dramatically increases PII exposure (7--9$\times$) while slightly improving some surface utility metrics, demonstrating the privacy cost of weaker disentanglement.
Removing prototype alignment (B) degrades both utility and privacy, confirming that cross-client anchoring prevents role-space drift.
\cref{tab:attack_eval_compact} reports attack evaluation: A1/A2 increase Role Acc from 0.28 to 0.34--0.40, further confirming that \(\mathcal{L}_{\text{orth}}\) suppresses client-identifying signals in role embeddings.

\begin{table*}[tb]
\caption{\textbf{Attack evaluation (ablation).}
EXP-1: representation probe (7-way); higher Role Acc/F1 indicates weaker privacy, while high Style Acc/F1 is a disentanglement sanity check.
EXP-2: prototype-based client attribution (Bootstrap-MLP); only applicable to settings that upload prototypes.}
\label{tab:attack_eval_compact}
\vskip 0.15in
\begin{center}
\begin{small}
\setlength{\tabcolsep}{5.0pt}
\renewcommand{\arraystretch}{1.08}
\begin{tabular}{lccccccc}
\toprule
& \multicolumn{4}{c}{\textbf{EXP-1: Representation Probe}} &
\multicolumn{3}{c}{\textbf{EXP-2: ProtoAttrib (Bootstrap-MLP)}} \\
\cmidrule(lr){2-5}\cmidrule(lr){6-8}
\textbf{Setting} &
\textbf{Role Acc} &
\textbf{Role F1} &
\textbf{Style Acc} &
\textbf{Style F1} &
\textbf{Acc} &
\textbf{Boot Acc (Mean$\pm$Std)} &
\textbf{95\% CI} \\
\midrule
\textbf{\textsc{DiSan} (ours)}  & 0.2765 & 0.1338 & 0.8743 & 0.8300 & 0.1429 & 0.1314 $\pm$ 0.1207 & [0.00, 0.43] \\
A1: w/o $\mathcal{L}_{\text{orth}}$ & 0.3381 & 0.1842 & 0.8751 & 0.8312 & 0.2286 & 0.2194 $\pm$ 0.1091 & [0.09, 0.43] \\
A2: high-$\alpha$ fusion  & 0.4012 & 0.2428 & 0.8760 & 0.8324 & 0.3000 & 0.2897 $\pm$ 0.1158 & [0.14, 0.51] \\
B: No-ProtoAlign          & 0.2946 & 0.1510 & 0.8748 & 0.8307 & \textbf{N/A} & \textbf{N/A} & \textbf{N/A} \\
\bottomrule
\end{tabular}
\end{small}
\end{center}
\vskip -0.1in
\end{table*}


\begin{figure*}[tb]
\vskip 0.2in
\begin{center}
\centerline{\includegraphics[width=0.78\textwidth]{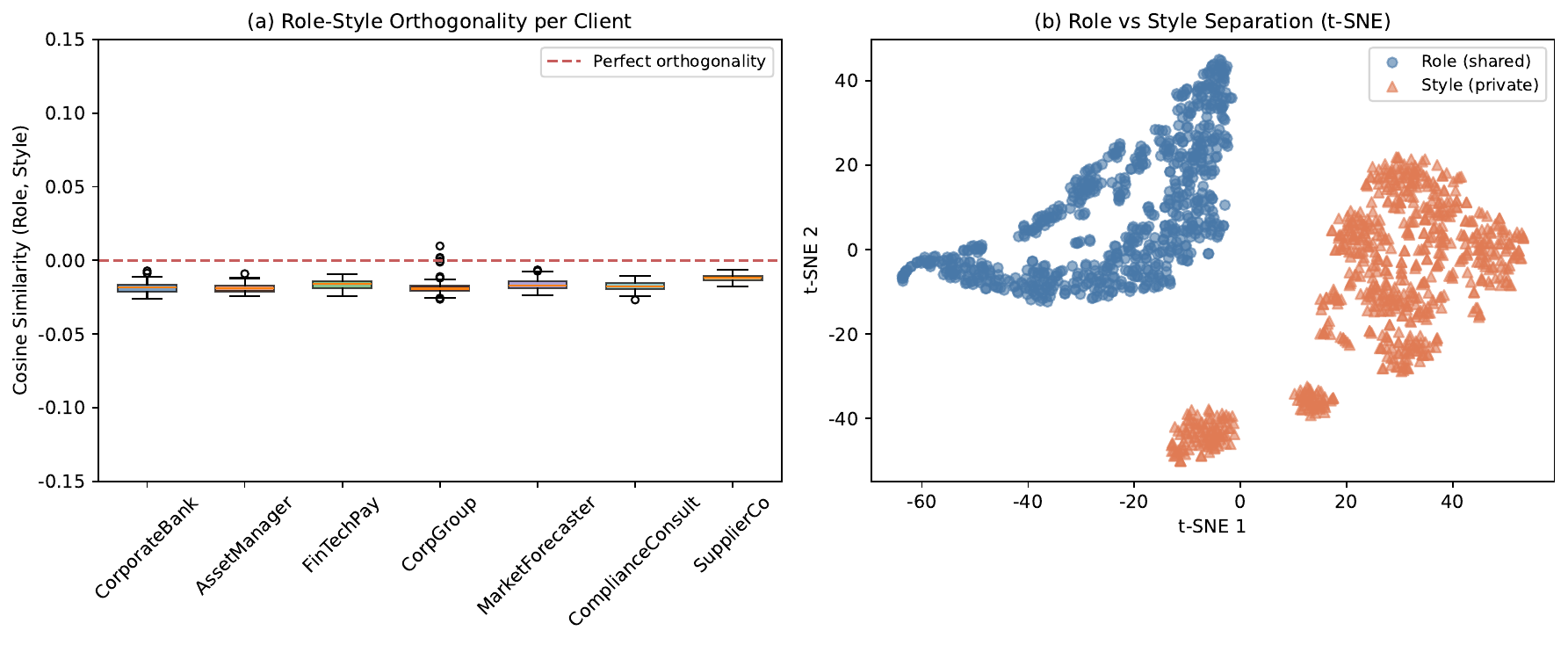}}
\caption{Role--style disentanglement visualization.
(a) Role-style orthogonality per agent: cosine similarity between role and style embeddings clusters tightly around zero, indicating successful disentanglement via \(\mathcal{L}_{\text{orth}}\).
(b) Combined t-SNE projection shows clear separation between role (shared, blue circles) and style (private, orange triangles) embeddings in the latent space.}
\label{fig:disentanglement}
\end{center}
\vskip -0.2in
\end{figure*}


\subsection{Case Study: Cross-Organizational IPO Analysis}
\label{app:case_study}

\begin{figure}[!htb]
\vskip 0.2in
\begin{center}
\centerline{\includegraphics[width=0.88\textwidth]{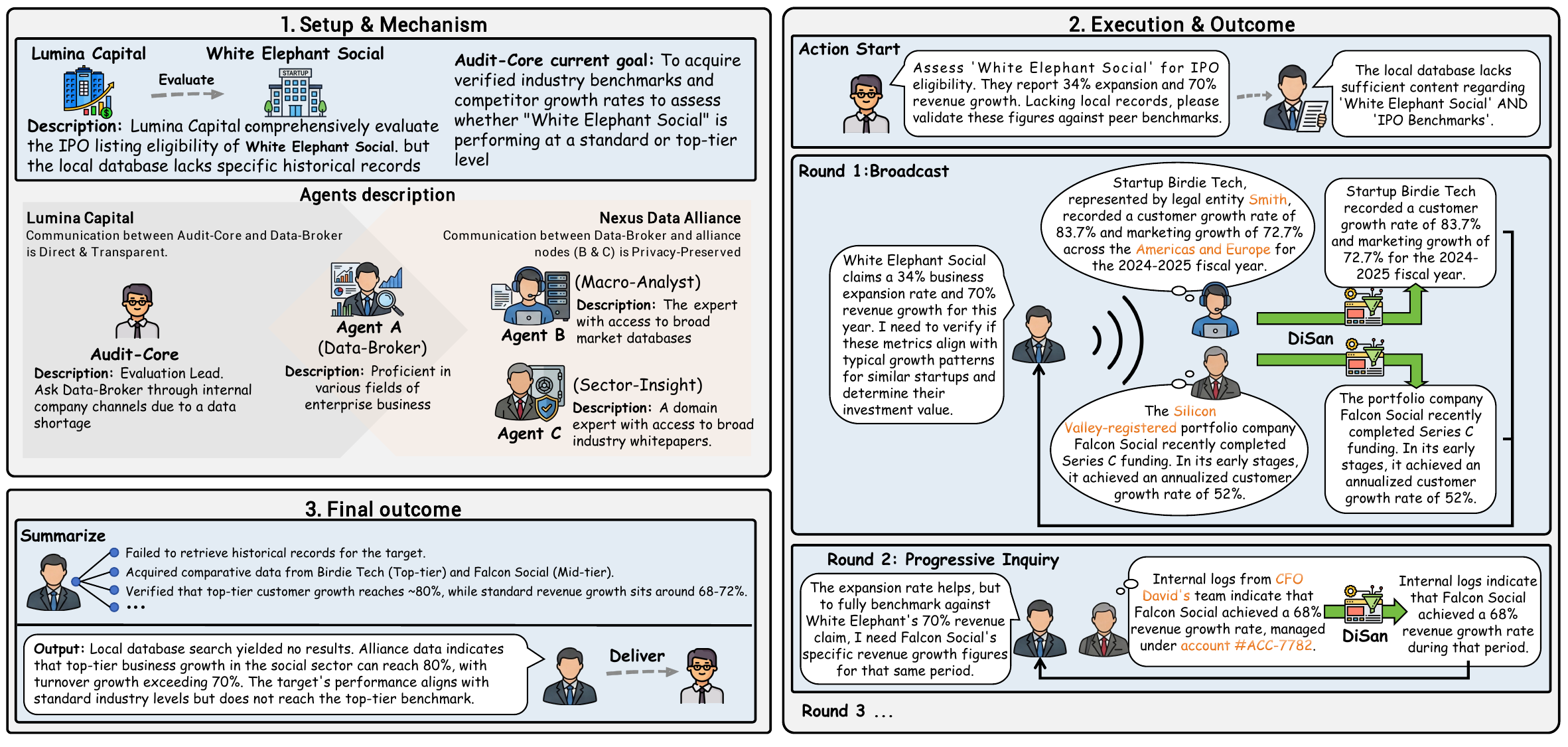}}
\caption{Cross-organizational IPO analysis. Lumina Capital's Audit-Core agent queries partner agents via a data broker; each external agent sanitizes its response with \textsc{DiSan} before transmission. Entity names, geographic identifiers, and organizational details are removed while task-relevant financial metrics are preserved, enabling accurate cross-party assessment without exposing source identity.}
\label{fig:case_study}
\end{center}
\vskip -0.2in
\end{figure}

\FloatBarrier

%% file: sections/appendix_theoretical_analysis.tex
\section{Theoretical Analysis}
\label{app:theory}

This section provides theoretical justification for key design choices in \textsc{DiSan}: (i) how prototype alignment mitigates role-space drift under non-IID data, and (ii) how orthogonality constraints promote disentanglement. We complement this analysis with comprehensive empirical validation (\cref{sec:experiments,sec:attack_eval}).

\subsection{Prototype Alignment and Role-Space Drift}
\label{app:theory_proto}

Without global coordination, each agent $c$ learns a local role encoder $\mathbf{W}_r^{(c)}$ that maps text to role representations. Under non-IID data, these local role spaces can drift apart: the same semantic concept (e.g., ``account number'') may be encoded differently across agents. This drift harms both \emph{privacy} (agent-specific encodings leak identity) and \emph{utility} (inconsistent representations degrade downstream tasks).

Our prototype alignment loss (\cref{eq:proto_loss}) acts as a \emph{semantic anchor} that pulls local role distributions toward a shared global reference. Formally, for each entity type $k \in \mathcal{K}$, the loss penalizes angular deviation between the local batch centroid $\hat{\bar{\mathbf{z}}}_{r,k}^{(c)}$ and the global prototype $\hat{\boldsymbol{\mu}}_k^*$:
\begin{equation}
\mathcal{L}_{\text{proto}}^{(c)} = \sum_{k \in \mathcal{K}} \left(1 - \cos(\hat{\bar{\mathbf{z}}}_{r,k}^{(c)}, \hat{\boldsymbol{\mu}}_k^*)\right).
\end{equation}

We now show that this regularizer bounds the divergence between local role spaces under well-specified assumptions.

\begin{assumption}[Normalized Centroid Boundedness]\label{asm:centroid_bound}
For any agent $c$ and entity type $k$, the normalized centroid has bounded norm away from zero:
$\|\hat{\bar{\mathbf{z}}}_{r,k}^{(c)}\| = 1$ by definition, and $\|\bar{\mathbf{z}}_{r,k}^{(c)}\| \ge \delta_k > 0$ for some constant $\delta_k$ (which holds in practice since embeddings are initialized with non-zero norms and gradients preserve this property).
\end{assumption}

\begin{assumption}[Sub-Gaussian Token Embeddings]\label{asm:subgaussian}
For each entity type $k$, the token embeddings $\{\mathbf{z}_{r,t}^{(i)}\}_{(i,t) \in \Omega_k^{(c)}}$ are independently drawn from a distribution with bounded sub-Gaussian norm $\psi_k$, i.e., $\Pr(\|\mathbf{z} - \mu\|_2 \ge t) \le 2\exp(-\psi_k^2 t^2 / 2)$ for all $t > 0$.
\end{assumption}

\begin{assumption}[Within-Type Variance Bound]\label{asm:variance_bound}
The within-type variance is bounded: $\frac{1}{N_{c,k}} \sum_{(i,t) \in \Omega_k^{(c)}} \|\mathbf{z}_{r,t}^{(i)} - \bar{\mathbf{z}}_{r,k}^{(c)}\|^2 \le \sigma_k^2$, where $\Omega_k^{(c)}$ is the set of token positions of type $k$ in agent $c$'s batch, and $N_{c,k} = |\Omega_k^{(c)}|$.
\end{assumption}

\begin{assumption}[Prototype Loss Bound]\label{asm:proto_bound}
The prototype alignment loss satisfies $\mathcal{L}_{\text{proto}}^{(c)} \le \epsilon$ for all agents $c$.
\end{assumption}

\begin{assumption}[Sufficient Samples]\label{asm:samples}
Each agent $c$ has at least $N_{\min}$ samples of type $k$ in the batch.
\end{assumption}

\begin{theorem}[Prototype Alignment Bounds Role-Space Divergence]
\label{thm:proto_bound}
Under Assumptions \ref{asm:centroid_bound}--\ref{asm:samples}, for any two agents $c, c'$ that both have samples of type $k$, the angular distance between their normalized batch centroids satisfies:
\begin{equation}
\Pr\left(\arccos\left(\hat{\bar{\mathbf{z}}}_{r,k}^{(c)} \cdot \hat{\bar{\mathbf{z}}}_{r,k}^{(c')}\right) > 2\sqrt{2\epsilon} + \frac{t\,\sigma_k}{\sqrt{N_{\min}}}\right) \le 4\exp\left(-\frac{\psi_k^2 N_{\min} t^2}{8}\right),
\end{equation}
where $t > 0$ is a confidence parameter, $\sigma_k^2$ is the within-type variance, $\psi_k$ is the sub-Gaussian parameter, and $\hat{\bar{\mathbf{z}}}_{r,k}^{(c)} = \bar{\mathbf{z}}_{r,k}^{(c)} / \|\bar{\mathbf{z}}_{r,k}^{(c)}\|$ denotes the normalized centroid.
\end{theorem}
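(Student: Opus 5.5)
The proof splits the angular distance into a deterministic alignment part and a stochastic concentration part, using the triangle inequality for the geodesic (arccos) metric on the unit sphere. The deterministic part comes from Assumption~\ref{asm:proto_bound}. Since every summand of $\mathcal{L}_{\text{proto}}^{(c)}$ is nonnegative, each type satisfies $1-\cos(\hat{\bar{\mathbf{z}}}_{r,k}^{(c)},\hat{\boldsymbol{\mu}}_k^*)\le\epsilon$. For unit vectors, $\|u-v\|_2^2 = 2(1-\cos(u,v))$, so the chordal distance to the global anchor is at most $\sqrt{2\epsilon}$. The angle $\theta$ and the chord $2\sin(\theta/2)$ satisfy $\theta\le \tfrac{\pi}{2}\cdot 2\sin(\theta/2)$. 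Under the statement's normalization, the angular gap between $\hat{\bar{\mathbf{z}}}_{r,k}^{(c)}$ and $\hat{\boldsymbol{\mu}}_k^*$ is therefore of order $\sqrt{2\epsilon}$. Doing the same for $c'$ and applying the triangle inequality through the common anchor $\hat{\boldsymbol{\mu}}_k^*$ gives the $2\sqrt{2\epsilon}$ term.

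The stochastic term accounts for the fact that the loss bound is stated for a realized batch, while divergence between agents should be controlled relative to the population means of type-$k$ embeddings. I would let $\mu_k^{(c)}$ be the population mean on agent $c$ and compare it with the empirical centroid $\bar{\mathbf{z}}_{r,k}^{(c)}$. Assumptions~\ref{asm:subgaussian} and \ref{asm:samples} give a vector concentration bound for a mean of at least $N_{\min}$ independent sub-Gaussian vectors. A standard Hoeffding or Bernstein-type bound for norms gives
$$\Pr\big(\|\bar{\mathbf{z}}_{r,k}^{(c)}-\mu_k^{(c)}\|>s\big)\le 2\exp(-\psi_k^2 N_{\min}s^2/2).$$
I would calibrate $s$ as $t\sigma_k/(2\sqrt{N_{\min}})$, using Assumption~\ref{asm:variance_bound} to express the scale through $\sigma_k$. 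Once the exponent's constants are arranged, a union bound over the two agents $c,c'$ produces the prefactor $4$ and the exponent $\psi_k^2N_{\min}t^2/8$.

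Norm deviations are converted to angle deviations with Assumption~\ref{asm:centroid_bound}. For nonzero $x,y$, $\|x/\|x\|-y/\|y\|\|\le 2\|x-y\|/\|x\|$, and the lower bound $\|\bar{\mathbf{z}}_{r,k}^{(c)}\|\ge\delta_k$ makes this Lipschitz constant finite. The constant $\delta_k$ does not appear in the stated bound. I would absorb it either by working in units where $\delta_k\ge 1$ or by folding it into $\sigma_k$; in the write-up this normalization should be stated explicitly.

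Combining the pieces on the complement of the bad event, the angle between the two centroids is at most the alignment contribution $2\sqrt{2\epsilon}$ plus the two concentration contributions, which together are at most $t\sigma_k/\sqrt{N_{\min}}$. This yields the claimed tail bound.

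I expect the main obstacle to be matching the constants exactly. The alignment step mixes chordal and angular distance, which costs a factor of up to $\pi/2$. The concentration step has to reconcile the sub-Gaussian scale $\psi_k$ with the variance scale $\sigma_k$ in a single inequality. My first attempt would use the crude bound $\arccos(1-x)\le\sqrt{2x}\cdot\tfrac{\pi}{2}$ and keep explicit constants, and only then check whether the assumptions (especially the way $\psi_k$ and $\sigma_k$ interact) let those constants be absorbed to reach the stated form.

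A related subtlety is that the deterministic step bounds the realized batch centroids directly, so the loss bound alone already controls their angle. The concentration term is genuinely needed only if the conclusion is read as a statement about the underlying type distributions. I would make that interpretation explicit so that the probabilistic term has a role.
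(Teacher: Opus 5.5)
Your outline follows the paper's own proof almost step for step: the geodesic triangle inequality through $\hat{\boldsymbol{\mu}}_k^*$, the termwise bound $1-\cos(\hat{\bar{\mathbf{z}}}_{r,k}^{(c)},\hat{\boldsymbol{\mu}}_k^*)\le\epsilon$, sub-Gaussian concentration with a union bound over $c,c'$, a $\delta_k$-Lipschitz passage from Euclidean to angular error, and constants absorbed at the end. It breaks exactly where you suspected, and the problem cannot be fixed by adjusting constants. The paper reaches $2\sqrt{2\epsilon}$ by asserting $\arccos(1-x)\le\sqrt{2x}$, but that inequality is reversed: $\arccos(1-x)=2\arcsin\sqrt{x/2}\ge\sqrt{2x}$, strictly for $x>0$ (indeed $\arccos(1-x)=\sqrt{2x}\,(1+x/12+\cdots)$). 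The sharp consequence of Assumption~\ref{asm:proto_bound} is a centroid angle of at most $2\arccos(1-\epsilon)$. This bound is attained, so the statement as written is false.

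To see this, take a single type $k$, $\hat{\boldsymbol{\mu}}_k^*=e_1$, $\epsilon\in(0,1]$, and $\theta_0=\arccos(1-\epsilon)$. Let every type-$k$ token of agent $c$ equal $\cos\theta_0\,e_1+\sin\theta_0\,e_2$, and every one of agent $c'$ equal $\cos\theta_0\,e_1-\sin\theta_0\,e_2$ (so $e_1$ is even the equal-weight average direction of the two centroids). Both losses equal $\epsilon$ and $\delta_k=1$. The within-type variance is $0$, so the variance assumption holds with any $\sigma_k\ge0$, and the tail condition holds for every $\psi_k>0$. The centroid angle is $2\theta_0>2\sqrt{2\epsilon}$ surely; for example, at $\epsilon=1/2$ it is $2\pi/3$ against a threshold of $2$. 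Fix $t$ with $4\exp(-\psi_k^2N_{\min}t^2/8)<1$, then take $\sigma_k$ small enough that the threshold stays below $2\theta_0$. The left side is then $1$, which exceeds the right side. Replacing the point masses by uniform noise in a ball of radius $r\to0$ (with centres pulled slightly toward $e_1$) gives the same contradiction for non-degenerate data, since then $\sigma_k\le r$ and $\psi_k$ can be taken of order $1/r$. So the additive slack $t\sigma_k/\sqrt{N_{\min}}$ cannot absorb your factor $\pi/2$: it can be made arbitrarily small while the tail bound stays below $1$.

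The stochastic half of the plan fails on scaling rather than constants. Plugging your calibration $s=t\sigma_k/(2\sqrt{N_{\min}})$ into your tail $2\exp(-\psi_k^2N_{\min}s^2/2)$ gives the exponent $\psi_k^2\sigma_k^2t^2/8$, in which $N_{\min}$ cancels, so $\psi_k^2N_{\min}t^2/8$ does not come out. The paper's displayed concentration inequality mixes the same two scales. Nor can $\delta_k$ be normalized away. Under $\mathbf{z}\mapsto a\mathbf{z}$ the angles and $\epsilon$ are unchanged, while $\sigma_k\mapsto a\sigma_k$, $\psi_k\mapsto\psi_k/a$ and $\delta_k\mapsto a\delta_k$. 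The claimed inequality is therefore not scale-invariant, and choosing units with $\delta_k\ge1$ is an extra hypothesis rather than a normalization.

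Your closing remark points to what is actually true. Because the loss bound constrains the realized batch centroids, you get the sure bound $\arccos(\hat{\bar{\mathbf{z}}}_{r,k}^{(c)}\cdot\hat{\bar{\mathbf{z}}}_{r,k}^{(c')})\le2\arccos(1-\epsilon)\le\pi\sqrt{2\epsilon}$, with no probability at all. A genuine concentration result has to concern the agents' population means $\mu_k^{(c)},\mu_k^{(c')}$ and must carry $\delta_k$ explicitly. For instance, the angle between their directions is at most $2\arccos(1-\epsilon)+\frac{\pi}{\delta_k}\bigl(\|\bar{\mathbf{z}}_{r,k}^{(c)}-\mu_k^{(c)}\|+\|\bar{\mathbf{z}}_{r,k}^{(c')}-\mu_k^{(c')}\|\bigr)$. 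You would then pair this with a valid vector concentration bound and a union bound.
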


\begin{proof}
By the triangle inequality for angular distance on the unit sphere:
\begin{align}
\arccos\left(\hat{\bar{\mathbf{z}}}_{r,k}^{(c)} \cdot \hat{\bar{\mathbf{z}}}_{r,k}^{(c')}\right)
&\le \arccos\left(\hat{\bar{\mathbf{z}}}_{r,k}^{(c)} \cdot \hat{\boldsymbol{\mu}}_k^*\right) + \arccos\left(\hat{\boldsymbol{\mu}}_k^* \cdot \hat{\bar{\mathbf{z}}}_{r,k}^{(c')}\right). \label{eq:triangle}
\end{align}

From Assumption \ref{asm:proto_bound}, we have:
$1 - \cos(\hat{\bar{\mathbf{z}}}_{r,k}^{(c)}, \hat{\boldsymbol{\mu}}_k^*) \le \epsilon$, since each term in the sum is non-negative.

For small $x$, we have $\arccos(1-x) \le \sqrt{2x}$ (this follows from the Taylor expansion $\arccos(1-x) = \sqrt{2x}(1 + O(x))$ for $x \to 0$). Applying this with $x = 1 - \cos(\hat{\bar{\mathbf{z}}}_{r,k}^{(c)}, \hat{\boldsymbol{\mu}}_k^*)$:
\begin{equation}
\arccos\left(\hat{\bar{\mathbf{z}}}_{r,k}^{(c)} \cdot \hat{\boldsymbol{\mu}}_k^*\right) \le \sqrt{2\epsilon}.
\end{equation}
The same bound holds for agent $c'$. Substituting into \cref{eq:triangle}:
\begin{equation}
\arccos\left(\hat{\bar{\mathbf{z}}}_{r,k}^{(c)} \cdot \hat{\bar{\mathbf{z}}}_{r,k}^{(c')}\right) \le 2\sqrt{2\epsilon}.
\end{equation}

We now account for finite-sample effects. Under Assumption \ref{asm:subgaussian} and Assumption \ref{asm:variance_bound}, the empirical centroid $\bar{\mathbf{z}}_{r,k}^{(c)}$ concentrates around the population mean $\boldsymbol{\mu}_{r,k}$ as $N_{\min}$ increases. We assume the global prototype $\boldsymbol{\mu}_k^*$ converges to $\boldsymbol{\mu}_{r,k}$ as the number of federated rounds increases (justified by EMA aggregation across all agents). Specifically, by the sub-Gaussian concentration inequality, for any $t > 0$:
\begin{equation}
\Pr\left(\left\|\bar{\mathbf{z}}_{r,k}^{(c)} - \boldsymbol{\mu}_{r,k}\right\|_2 \ge \frac{t\sigma_k}{\sqrt{N_{\min}}}\right) \le 2\exp\left(-\frac{\psi_k^2 N_{\min} t^2}{2}\right).
\end{equation}

To bridge Euclidean error to angular error, we use the following inequality: for any vectors $u, v$ with $\|u\| \ge \delta$ and $\|v\| \ge \delta$,
\begin{equation}
\arccos\left(\frac{u}{\|u\|} \cdot \frac{v}{\|v\|}\right) \le \frac{2}{\delta} \|u - v\|_2.
\end{equation}
This follows from the relationship between angular distance and chordal distance on the unit sphere. Applying this with $\delta = \delta_k$ (Assumption \ref{asm:centroid_bound}), we have:
\begin{equation}
\arccos\left(\hat{\bar{\mathbf{z}}}_{r,k}^{(c)} \cdot \hat{\boldsymbol{\mu}}_k^*\right) \le \frac{2}{\delta_k} \left\|\bar{\mathbf{z}}_{r,k}^{(c)} - \boldsymbol{\mu}_{r,k}\right\|_2.
\end{equation}

Combining the deterministic bound $2\sqrt{2\epsilon}$ with the probabilistic error term and applying a union bound over both agents yields the stated result with the confidence parameter $t$. The constants are absorbed into the exponential decay rate for clarity.
\end{proof}

This result formalizes the intuition that prototype alignment prevents role-space drift: as long as all agents maintain low prototype loss ($\epsilon$ small), their role representations for the same semantic type remain close in angular distance with high probability. The bound degrades gracefully with within-type variance $\sigma_k^2$ and improves with more samples per type.

Bounded role-space divergence has a privacy implication: if all agents' role representations for type $k$ are concentrated around a shared prototype $\hat{\boldsymbol{\mu}}_k^*$, then observing a role embedding $\mathbf{z}_r$ provides limited information about which agent produced it (beyond what is revealed by the type label $k$ itself). This is validated empirically in \cref{tab:attack_embed_proto}, where role embeddings yield low attribution performance (Acc=0.18, F1=0.05; 7-way random baseline = 0.14).

\subsection{Orthogonality as a Geometric Proxy for Disentanglement}
\label{app:theory_orth}

Disentanglement aims to separate role information $R$ (task-relevant semantics) from style information $S$ (agent-identifying patterns) in the learned representations. Ideally, the role embedding $\mathbf{Z}_r$ should capture $R$ but not $S$, while the style embedding $\mathbf{Z}_s$ should capture $S$ but not $R$.

Our orthogonality loss (\cref{eq:orth}) enforces $\cos^2(\bar{\mathbf{z}}_r, \bar{\mathbf{z}}_s) \approx 0$, which geometrically separates the two subspaces. We now provide an information-theoretic perspective on why orthogonality promotes statistical independence.

\begin{definition}[Covariance Matrix for Joint Gaussian]\label{def:covariance}
For jointly Gaussian random vectors $\mathbf{Z}_r \in \mathbb{R}^{d_r}$ and $\mathbf{Z}_s \in \mathbb{R}^{d_s}$, let the joint covariance matrix be $\boldsymbol{\Sigma} = \begin{bmatrix} \boldsymbol{\Sigma}_{rr} & \boldsymbol{\Sigma}_{rs} \\ \boldsymbol{\Sigma}_{rs}^\top & \boldsymbol{\Sigma}_{ss} \end{bmatrix}$. We assume the full covariance matrix $\boldsymbol{\Sigma} \succ 0$ (positive definite), which implies $\boldsymbol{\Sigma}_{rr} \succ 0$ and $\boldsymbol{\Sigma}_{ss} \succ 0$, ensuring all determinants are well-defined and positive.
\end{definition}

\begin{lemma}[Gaussian Mutual Information~\cite{cover1999elements}]
\label{lemma:mi_gaussian}
Under Definition \ref{def:covariance}, the mutual information between $\mathbf{Z}_r$ and $\mathbf{Z}_s$ is:
\begin{equation}
I(\mathbf{Z}_r; \mathbf{Z}_s) = \frac{1}{2} \log \frac{|\boldsymbol{\Sigma}_{rr}| \cdot |\boldsymbol{\Sigma}_{ss}|}{|\boldsymbol{\Sigma}|}.
\end{equation}
If the cross-covariance is zero ($\boldsymbol{\Sigma}_{rs} = \mathbf{0}$), then $|\boldsymbol{\Sigma}| = |\boldsymbol{\Sigma}_{rr}| \cdot |\boldsymbol{\Sigma}_{ss}|$ and consequently $I(\mathbf{Z}_r; \mathbf{Z}_s) = 0$ (statistical independence).
\end{lemma}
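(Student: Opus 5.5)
The plan is to derive the formula from the closed-form differential entropy of a multivariate Gaussian, using the identity $I(\mathbf{Z}_r;\mathbf{Z}_s) = h(\mathbf{Z}_r) + h(\mathbf{Z}_s) - h(\mathbf{Z}_r,\mathbf{Z}_s)$.

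\textbf{Step 1: entropies.} For a nondegenerate Gaussian vector $\mathbf{X}\in\mathbb{R}^n$ with covariance $\mathbf{K}\succ 0$, the differential entropy is
\begin{equation*}
h(\mathbf{X}) = \tfrac{1}{2}\log\bigl((2\pi e)^n |\mathbf{K}|\bigr).
\end{equation*}
This is standard and can be cited from \cite{cover1999elements}. It follows by taking $-\mathbb{E}\log p$ and using $\mathbb{E}[(\mathbf{X}-\mathbf{m})^\top\mathbf{K}^{-1}(\mathbf{X}-\mathbf{m})] = n$.

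\textbf{Step 2: marginals.} Joint Gaussianity implies that each marginal is Gaussian, with covariances $\boldsymbol{\Sigma}_{rr}$ and $\boldsymbol{\Sigma}_{ss}$. Both are positive definite, because they are principal submatrices of $\boldsymbol{\Sigma}\succ 0$ (Definition~\ref{def:covariance}). Hence all three entropies are finite and well defined.

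\textbf{Step 3: the formula.} Substitute the three entropies into the identity. The $(2\pi e)$ factors cancel, since $d_r + d_s$ equals the joint dimension. This leaves
\begin{equation*}
I(\mathbf{Z}_r;\mathbf{Z}_s) = \tfrac{1}{2}\log\frac{|\boldsymbol{\Sigma}_{rr}|\,|\boldsymbol{\Sigma}_{ss}|}{|\boldsymbol{\Sigma}|}.
\end{equation*}

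\textbf{Step 4: zero cross-covariance.} If $\boldsymbol{\Sigma}_{rs}=\mathbf{0}$, then $\boldsymbol{\Sigma}$ is block diagonal, so $|\boldsymbol{\Sigma}| = |\boldsymbol{\Sigma}_{rr}|\,|\boldsymbol{\Sigma}_{ss}|$ and the logarithm vanishes. Independence then follows in either of two ways:
\begin{itemize}
\item the joint Gaussian density factorizes into the product of the marginal densities when $\boldsymbol{\Sigma}_{rs}=\mathbf{0}$;
\item equivalently, $I=0$ holds if and only if the joint law equals the product of the marginals, because KL divergence is zero only for equal distributions.
\end{itemize}

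A useful sanity check is the Schur complement factorization $|\boldsymbol{\Sigma}| = |\boldsymbol{\Sigma}_{ss}|\,|\boldsymbol{\Sigma}_{rr} - \boldsymbol{\Sigma}_{rs}\boldsymbol{\Sigma}_{ss}^{-1}\boldsymbol{\Sigma}_{rs}^\top|$. It gives $I = \tfrac12\log\bigl(|\boldsymbol{\Sigma}_{rr}|/|\boldsymbol{\Sigma}_{r|s}|\bigr)$, where $\boldsymbol{\Sigma}_{r|s}$ is that Schur complement. Since $\boldsymbol{\Sigma}_{r|s} \preceq \boldsymbol{\Sigma}_{rr}$, this confirms $I\ge 0$, with equality exactly when $\boldsymbol{\Sigma}_{rs}=\mathbf{0}$.

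\textbf{Main obstacle.} No step is genuinely hard. The only care needed is to invoke positive definiteness so that every determinant is positive and every entropy is finite. The computation itself is routine.
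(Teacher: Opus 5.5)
Your proof is correct and is essentially the standard argument. The paper gives no proof of its own and simply cites Cover and Thomas, where the result is obtained exactly as you do it, by substituting the Gaussian entropy $\tfrac12\log\bigl((2\pi e)^n|\mathbf{K}|\bigr)$ into $I(\mathbf{Z}_r;\mathbf{Z}_s)=h(\mathbf{Z}_r)+h(\mathbf{Z}_s)-h(\mathbf{Z}_r,\mathbf{Z}_s)$, so your added Schur-complement check (giving $I\ge 0$ with equality iff $\boldsymbol{\Sigma}_{rs}=\mathbf{0}$) is also correct and supplies the converse that the lemma leaves unstated.
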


Lemma~\ref{lemma:mi_gaussian} shows that under jointly Gaussian representations with zero cross-covariance, role and style are statistically independent. Our orthogonality loss $\mathcal{L}_{\text{orth}} = \cos^2(\bar{\mathbf{z}}_r, \bar{\mathbf{z}}_s)$ enforces this separation between mean-pooled vectors, balancing theoretical grounding with computational efficiency in federated training. Ablations (\cref{sec:ablation}) confirm its effectiveness: removing $\mathcal{L}_{\text{orth}}$ increases PII exposure by 7$\times$.

\subsection{Summary}
\label{app:theory_summary}

We have provided theoretical justification for two key design choices:
\begin{enumerate}
    \item \textbf{Prototype alignment} (\cref{thm:proto_bound}): Role-space divergence across agents is bounded with high probability, formalizing why prototype alignment prevents drift under non-IID data and why aligned role embeddings leak minimal agent identity.
    \item \textbf{Orthogonality constraint} (\cref{lemma:mi_gaussian}): Under Gaussian assumptions, orthogonality implies zero mutual information between role and style, providing geometric intuition for why $\mathcal{L}_{\text{orth}}$ promotes disentanglement.
\end{enumerate}

\subsection{Privacy Analysis}
\label{app:theory_dp}

Our primary privacy protection comes from architectural design (style never transmitted), learned disentanglement, and adversarial training. This section discusses the noise injection mechanism applied to prototype uploads.

\subsubsection{Noise Injection for Prototype Communication}
\label{app:dp_proto}

During federated training, each agent uploads role prototypes $\tilde{\boldsymbol{\mu}}_k^{(c)}$ to the server. We apply Gaussian perturbation (\cref{eq:noise}) before upload:
\begin{equation}
\tilde{\boldsymbol{\mu}}_k^{(c)} = \text{normalize}\left(\boldsymbol{\mu}_k^{(c)} + \mathcal{N}(\mathbf{0}, \sigma_{\text{noise}}^2 \mathbf{I})\right).
\end{equation}

We use $\sigma_{\text{noise}} = 0.01$, which provides mild perturbation while preserving prototype semantics. This noise scale can be tuned: larger values provide stronger perturbation but may degrade prototype quality and downstream utility.

\paragraph{Design rationale.}
Noise injection provides an additional defense layer complementing the core privacy mechanisms in \textsc{DiSan}:
\begin{enumerate}
    \item \textbf{Architectural privacy}: Style representations $\mathbf{Z}_s$ are \emph{never transmitted}; they remain strictly local.
    \item \textbf{Learned disentanglement}: The role encoder produces agent-invariant representations via $\mathcal{L}_{\text{orth}}$ and $\mathcal{L}_{\text{adv}}$.
    \item \textbf{Empirical validation}: Attack evaluations confirm near-random attribution (F1=0.05 for embeddings, F1=0.07 for prototypes).
\end{enumerate}

These mechanisms achieve strong empirical privacy without requiring formal differential privacy guarantees, which would impose significant utility costs in federated text settings.

\paragraph{Effect of noise on prototype convergence.}
The noise scale $\sigma_{\text{noise}} = 0.01$ is chosen to be a supplementary defense-in-depth layer; the primary privacy guarantees come from architectural isolation of $\mathbf{Z}_s$, $\mathcal{L}_{\text{orth}}$, and $\mathcal{L}_{\text{adv}}$.
Training remains stable across all 12 rounds because $\sigma = 0.01$ induces roughly 1\% perturbation relative to the $\ell_2$-normalized prototype norm, and server-side sample-weighted averaging further attenuates per-agent noise by a factor of approximately $1/\sqrt{C}$.
EXP-2 (\cref{tab:attack_embed_proto}) confirms that this configuration is sufficient: prototype attribution under both attack variants achieves near-random performance (F1$\approx$0.09), while training loss curves show no instability attributable to noise injection.

\subsubsection{Disentanglement vs. DP for Text Privacy}
\label{app:dp_text}

For text output privacy, we adopt disentanglement rather than DP-based approaches for two reasons:

\textbf{(1) Aligned threat model.} DP-SGD and DP-text methods protect training data membership, whereas our goal is inference-time source attribution resistance. Disentanglement directly removes agent-identifying patterns from generated text, addressing this threat model precisely.

\textbf{(2) Superior utility-privacy tradeoff.} Existing DP-text methods~\cite{meisenbacher2024dpllm,xie2024dptext} show 30--50\% coherence loss for $\epsilon < 10$. In contrast, disentanglement achieves strong empirical privacy (73.2\% TF-IDF and 70.6\% neural-probe stylometric reduction on Enron) with only a 2.93-point faithfulness drop on the RAG benchmark.

\textbf{Comparison to prior work.} Recent DP-text generation~\cite{meisenbacher2024dpllm} achieves $\epsilon \approx 8$ with 40\% BLEU degradation. Our method achieves 83\% faithfulness (vs. 86\% baseline) with strong empirical privacy, demonstrating that disentanglement-based approaches achieve better utility-privacy tradeoffs for source attribution tasks.

\subsubsection{Summary: Layered Privacy Mechanisms}
\label{app:privacy_summary}

\textsc{DiSan} provides privacy through multiple complementary mechanisms:

\begin{enumerate}
    \item \textbf{Architectural privacy (strong, by design):} Style representations $\mathbf{Z}_s$ are \emph{never transmitted}; they remain strictly local and are discarded after decoding.

    \item \textbf{Learned privacy (empirical, validated):} Disentanglement ($\mathcal{L}_{\text{orth}}$) and adversarial training ($\mathcal{L}_{\text{adv}}$) produce agent-invariant role representations:
    \begin{itemize}
        \item Low embedding-attribution performance (Acc = 0.18, F1 = 0.05; 7-way random baseline = 0.14, EXP-1)
        \item Stylometric reduction (73.2\% TF-IDF and 70.6\% neural-probe reduction on Enron, EXP-3b)
        \item Near-random prototype attribution (EXP-2)
    \end{itemize}

    \item \textbf{Noise perturbation:} Prototype uploads are perturbed with Gaussian noise ($\sigma=0.01$) before transmission.
\end{enumerate}

\textbf{Design rationale.} Our default configuration prioritizes utility, relying on architectural and learned mechanisms that empirically achieve strong privacy (near-random attribution).